\newbox\qbox
\def\usecolor#1{\csname\string\color@#1\endcsname\space}
\newcommand\bordercolor[1]{\colsplit{1}{#1}}
\newcommand\fillcolor[1]{\colsplit{0}{#1}}
\newcommand\outline[1]{\leavevmode \def\maltext{#1}\setbox\qbox=\hbox{\maltext}\boxgs{Q q 2 Tr \thickness\space w \fillcol\space \bordercol\space}{}\copy\qbox }
\newcommand\colsplit[2]{\colorlet{tmpcolor}{#2}\edef\tmp{\usecolor{tmpcolor}}\def\tmpB{}\expandafter\colsplithelp\tmp\relax \ifnum0=#1\relax\edef\fillcol{\tmpB}\else\edef\bordercol{\tmpC}\fi}
\def\colsplithelp#1#2 #3\relax{\edef\tmpB{\tmpB#1#2 }\ifnum `#1>`9\relax\def\tmpC{#3}\else\colsplithelp#3\relax\fi
}
\def\thickness{.3}
\newcommand{\next}{\text{\rm \raisebox{-.5pt}{\Large\textopenbullet}}}  \newcommand{\previous}{\text{\rm \raisebox{-.5pt}{\Large\textbullet}}}  \newcommand{\wnext}{\ensuremath{\widehat{\next}}}
\newcommand{\wprevious}{\ensuremath{\widehat{\previous}}}
\newcommand{\alwaysF}{\ensuremath{\square}}
\newcommand{\alwaysP}{\ensuremath{\blacksquare}}
\newcommand{\eventuallyF}{\ensuremath{\Diamond}}
\newcommand{\eventuallyP}{\ensuremath{\blacklozenge}}
\newcommand{\until}{\ensuremath{\mathbin{\mbox{\outline{$\bm{\mathsf{U}}$}}}}}
\newcommand{\release}{\ensuremath{\mathbin{\mbox{\outline{$\bm{\mathsf{R}}$}}}}}
\newcommand{\since}{\ensuremath{\mathbin{\bm{\mathsf{S}}}}}
\newcommand{\trigger}{\ensuremath{\mathbin{\bm{\mathsf{T}}}}}
\newcommand{\finally}{\ensuremath{\mbox{\outline{$\bm{\mathsf{F}}$}}}}
\newcommand{\initially}{\ensuremath{\bm{\mathsf{I}}}}
\mathchardef\mhyphen="2D
\newcommand{\intervco}[2]{\ensuremath{[#1..#2)}}
\newcommand{\intervoc}[2]{\ensuremath{(#1..#2]}}
\newcommand{\rangeco}[3]{\ensuremath{#1 \in \intervco{#2}{#3}}}
\newcommand{\rangeoc}[3]{\ensuremath{#1 \in \intervoc{#2}{#3}}}
 \providecommand{\logfont}{\textrm}
\newcommand{\HT}{\ensuremath{\logfont{HT}}}
\newcommand{\HTC}{\ensuremath{\logfont{HT}_{\!c}}}
\newcommand{\LTL}{\ensuremath{\logfont{LTL}}}
\newcommand{\THT}{\ensuremath{\logfont{THT}}}
\newcommand{\THTf}{\ensuremath{\THT_{\!f}}}
\newcommand{\TEL}{\ensuremath{\logfont{TEL}}}
\newcommand{\THTC}{\ensuremath{\THT_{\!c}}}
\newcommand{\tuple}[1]{\ensuremath{\langle #1 \rangle}}
\newcommand{\Htrace}{\ensuremath{\mathbf{H}}}
\newcommand{\Ttrace}{\ensuremath{\mathbf{T}}}
\newcommand{\M}{\ensuremath{\mathbf{M}}}
\providecommand{\Underscore}{\textunderscore}
\lstdefinelanguage{clingo}{basicstyle=\ttfamily,keywordstyle=[1]\bfseries,keywordstyle=[2]\bfseries,keywordstyle=[3]\bfseries,showstringspaces=false,literate={_}{\Underscore}1 {\%\%}{}0,escapeinside={\#(}{\#)},alsoletter={\#,\&},keywords=[1]{not,from,import,def,if,else,elif,return,while,break,and,or,for,in,del,and,class,with,as,is,yield,async},keywords=[2]{\#const,\#show,\#minimize,\#base,\#theory,\#count,\#external,\#program,\#script,\#end,\#heuristic,\#edge,\#project,\#show,\#sum},morecomment=[l]{\#\ },morecomment=[l]{\%\ },morestring=[b]",stringstyle={\itshape},commentstyle={\color{darkgray}}}
\lstdefinelanguage{clingcon}[]{clingo}{morekeywords={&dom,&sum,&nsum,&diff,&disjoint,&distinct,&minimize,&maximize,&show}}
\lstdefinelanguage{fclingo}[]{clingo}{morekeywords={&sum,&sus,&in,&df,&min,&max,&show}}
\lstdefinelanguage{clingodl}[]{clingo}{morekeywords={&diff}}
\lstdefinelanguage{python}{basicstyle=\ttfamily,keywordstyle=[1]\bfseries,showstringspaces=false,literate={_}{\Underscore}{1},escapeinside={\#(}{\#)},alsoletter={\#,\&},keywords=[1]{not,from,import,def,if,else,elif,return,while,break,and,or,for,in,del,and,class,with,as,is,yield,async},morecomment=[l]{\#\ },morestring=[b]",stringstyle={\itshape},commentstyle={\color{darkgray}}}
\newtheorem{definition}{Definition}
\newtheorem{proposition}{Proposition}
\newenvironment{proofof}[1]{\noindent {\bf Proof of #1.}}{\bigskip}
\newcommand{\var}{\ensuremath{\mathcal{A}}}
\newcommand{\Vh}{\ensuremath{\boldsymbol{v}_h}}
\newcommand{\Vt}{\ensuremath{\boldsymbol{v}_t}}
\newcommand{\Vw}{\ensuremath{\boldsymbol{v}_w}}
\newcommand{\eqdef}{\ensuremath{\mathbin{\raisebox{-1pt}[-3pt][0pt]{$\stackrel{\mathit{def}}{=}$}}}}
\newcommand{\undefined}{\ensuremath{u}}
\newcommand{\den}[1]{\llbracket #1 \rrbracket}
\newcommand{\tbl}[1]{[#1]}
\newcommand{\intervo}[2]{[#1..#2)}
\newcommand{\rangeo}[3]{#1 \in \intervo{#2}{#3}}
\newcommand{\intervc}[2]{[#1..#2]}
\newcommand{\rangec}[3]{#1 \in \intervc{#2}{#3}}
\newcommand{\true}{\ensuremath{\mathtt{t}}}
\newcommand{\vars}{\ensuremath{\mathit{terms}}}
\newcommand{\df}{\ensuremath{\mathit{df}}}
\newcommand{\ST}[2]{\tau_{#1}(#2)} \newcommand{\TERMS}[1]{\mathcal{T}(#1)}
\newcommand{\ATOMS}[1]{\mathcal{A}(#1)}
\newcommand{\QHT}{\ensuremath{\logfont{QHT}_{\mathcal{F}}^{=}(<)}} \newcommand{\peq}{\ensuremath{\preceq}}
\newcommand{\pt}{\ensuremath{\prec}}
\newcommand{\HTCSAT}{\ensuremath{\models_{\scriptscriptstyle\HTC}}}
\newcommand{\THTCSAT}{\ensuremath{\models_{\scriptscriptstyle\THTC}}}
\begin{document}

\lefttitle{P. Cabalar, M. Di\'eguez, F. Olivier, T. Schaub and I. St\'ephan}
\jnlPage{1}{42}
\jnlDoiYr{1984}
\doival{10.1017/XYZ}

\title{\mbox{Towards Constraint Temporal Answer Set Programming}}
\begin{authgrp}
  \author{\sn{Pedro} \gn{Cabalar}}
  \affiliation{University of Corunna, Spain}
  \author{\sn{Mart\'{\i}n} \gn{Di\'eguez}}
  \affiliation{University of Angers, France}
  \author{\sn{Fran\c{c}ois} \gn{Olivier}}
  \affiliation{CRIL CNRS \& Artois University, France}
  \author{\sn{Torsten} \gn{Schaub}}
  \affiliation{University of Potsdam, Germany}\affiliation{Potassco Solutions, Germany}
  \author{\sn{Igor} \gn{St\'ephan}}
  \affiliation{University of Angers, France}
\end{authgrp}

\maketitle

\begin{abstract}
  Reasoning about dynamic systems with a fine-grained temporal and numeric resolution presents significant
  challenges for logic-based approaches like Answer Set Programming (ASP).
  To address this, we introduce and elaborate upon a novel temporal and constraint-based extension of the logic of
  Here-and-There and its nonmonotonic equilibrium extension, representing, to the best of our knowledge, the first
  approach to nonmonotonic temporal reasoning with constraints specifically tailored for ASP.
  This expressive system is achieved by a synergistic combination of two foundational ASP extensions:
  the linear-time logic of Here-and-There, providing robust nonmonotonic temporal reasoning capabilities, and
  the logic of Here-and-There with constraints, enabling the direct integration and manipulation of numeric constraints, among others.
  This work establishes the foundational logical framework for tackling
  complex dynamic systems with high resolution within the ASP paradigm.
\end{abstract}

\section{Introduction}
\label{sec:introduction}

Reasoning about action and change is crucial for understanding
how dynamic systems evolve over time and how actions influence those changes.
Representing dynamic systems with higher resolution, such as by employing finer time units and more precise numeric variables,
significantly increases the complexity of reasoning about them,
posing particular challenges for logic-based approaches like Answer Set Programming (ASP;~\citealp{lifschitz08b}).
To illustrate this, consider the following scenario:
``{\em
  A radar is positioned at the 400 km mark on a road with a speed limit of 90 km/h.
  A car is initially traveling at 80 km/h.
  At time instant 4, the car accelerates by 11.35 km/h.
  Subsequently, at time instant 6, it decelerates by 2.301 km/h.
  The problem is to determine whether the car will exceed the speed limit and thus incur a fine.}''
A closer look reveals several key numeric entities:
the car's position and speed,
the radar's position and speed limit, and
the car's acceleration and deceleration.

Our objective is to provide the logical foundations for extending ASP
to effectively model such scenarios involving changing numeric values over time.
To achieve this,
we integrate two key logical formalisms that extend ASP:
the  linear-time logic of Here-and-There (\THT; \citealp{agcadipescscvi20a}),
with its ability to handle nonmonotonic temporal reasoning, and
the logic of Here-and-There with constraints (\HTC;~\citealp{cakaossc16a}),
which allows us to incorporate and reason about numeric constraints, among others.
Building upon standard linear temporal logic,
\THT\ and its nonmonotonic equilibrium extension, \TEL,
enable the expression of sophisticated temporal behaviors in dynamic systems through features like inertia and default reasoning.
\HTC\ complements this by allowing us to directly reason with numeric and other constraints.
Furthermore, its equilibrium extension provides solid logical foundations for tackling nonmonotonic constraint
satisfaction problems,
elegantly handling situations with incomplete information by using default values.
With this motivation established, we now proceed to introduce the combined logical framework.

 \section{Temporal Here-and-There with Constraints}\label{sec:background}

The syntax of the logic \emph{Temporal Here-and-There with constraints} (\THTC)
relies on a signature~$\tuple{\mathcal{X},\mathcal{D},\mathcal{A}}$,
akin to constraint satisfaction problems (CSPs;~\citealp{dechter03a}).
Specifically,
$\mathcal{X}$ denotes a set of variables and~$\mathcal{D}$ represents the domain of values,
often identified with their corresponding constants.
The set $\mathcal{A}$ comprises \emph{temporal constraint atoms} (or simply \emph{atoms}), which are defined over temporal terms.

A \emph{temporal term} (or simply \emph{term}) is fundamental to \THTC\ and
is intended to represent the value of a variable at past, present, or future time points.
We represent such a term by an expression $\next^ix$ where $x\in\mathcal{X}$ and $i\in\mathbb{Z}$,
while overloading the temporal modal operator for ``next'', viz.\ $\next$.
The integer $i$ indicates a temporal offset:
a positive $i$ signifies $i$ steps forward in time to retrieve the value of $x$,
a negative $i$ signifies $|i|$ steps backward, and
$i=0$ refers to the value of $x$ in the current state.
For notational convenience,
we use $x$ and $\next x$ as shorthand for $\next^0x$ and $\next^1x$, respectively.
Furthermore, we overload the operator $\previous$ for ``previous'' to represent past variable values.
That is, $\previous^ix$ stands for $\next^{-i}x$ for offset $i$.
For example,
the constraint $\next x + \previous y \le z$ is equivalent to $\next^1 x +  \next^{-1} y \le \next^0 z$.

Accordingly, an $n$-ary temporal constraint atom $c\in\mathcal{A}$ is often represented as
\(
c(\next^{o_1} x_1, \dots, \next^{o_n} x_n)
\)
where $\next^{o_1} x_1, \dots, \next^{o_n} x_n$ are (possibly identical) temporal terms.
To further illustrate, the atom
$(\previous^3 x = 4)$ can be read as ``$x$ had value 4 three states ago'', and
$(\next x = x)$ as ``the value of $x$ in the next state is identical to its current value''.

Then, \emph{temporal constraint formulas} (or just \emph{formulas}) are defined as follows:
\begin{align*}
  \varphi & ::= c\in\mathcal{A} \mid
            \bot \mid \varphi\wedge\varphi \mid\varphi\vee\varphi \mid\varphi\to\varphi \mid
            \next    \varphi \mid \varphi\until\varphi \mid \varphi\release\varphi\mid
            \previous\varphi \mid \varphi\since\varphi \mid \varphi\trigger\varphi
\end{align*}
Connectives $\bot$, $\wedge$, $\vee$ and $\to$ are \emph{Boolean}, while
the remaining connectives are \emph{temporal} modalities.
We use outlined operators to refer to future modalities and solid ones for past modalities.
Accordingly, $\next$, $\until$, $\release$
represent the future modalities \emph{next}, \emph{until}, \emph{release}, respectively,
while $\previous$, $\since$, $\trigger$ reflect their past counterparts:
\emph{previous}, \emph{since}, \emph{trigger}.

We also define several derived operators, including the Boolean connectives
\(
\top \eqdef \neg \bot
\),
\(
\neg \varphi \eqdef  \varphi \to \bot
\),
\(
\varphi \leftrightarrow \psi \eqdef (\varphi \to \psi) \wedge (\psi \to \varphi)
\),
and the following temporal operators:
\begin{align*}
  \initially          & \eqdef \neg\previous\top              &
  \wprevious  \varphi & \eqdef \previous\varphi\vee\initially &
  \alwaysP    \varphi & \eqdef \bot\trigger\varphi            &
  \eventuallyP\varphi & \eqdef \top\since\varphi
  \\
  \finally            & \eqdef \neg\next\top                  &
  \wnext \varphi      & \eqdef \next\varphi\vee\finally       &
  \alwaysF \varphi    & \eqdef \bot\release\varphi            &
  \eventuallyF\varphi & \eqdef \top\until\varphi
\end{align*}
Specifically,
$\initially$ and $\finally$ refer to the ``initial'' and ``final'' state, respectively;
$\wprevious \varphi$ ($\wnext\varphi$) refers to  $\varphi$ at the previous (next) state in case that it exists;
$\alwaysP \varphi$ ($\alwaysF \varphi$) is read as  ``$\varphi$ has always been true'' (``$\varphi$ will always be true''),
and $\eventuallyP \varphi$ ($\eventuallyF \varphi$) is read as ``$\varphi$ has been true'' (``$\varphi$ will eventually be true'').
For instance, $\alwaysF(\next x > x)$ allows us to express informally that
``the value of variable $x$ is increasing over time''.

We define the semantics of \THTC\ in terms of \HTC\ traces, which are sequences of \HTC\ interpretations~\citep{cakaossc16a}.
We first present the necessary semantic concepts of \HTC\ before further elaborating on these traces.
\HTC\ relies on partial valuations, that is, functions $v : \mathcal{X} \to \mathcal{D} \cup \{ \undefined \}$,
where the range is augmented by the special value $\undefined \not \in \mathcal{D}$ representing ``undefined''.
We define $\mathcal{D}_{\undefined}\eqdef \mathcal{D} \cup \{ \undefined\}$ and
denote such valuations also by $v : \mathcal{X} \to \mathcal{D}_{\undefined}$.
Given two partial valuations $v$ and $v'$,
we define $v \sqsubseteq v'$ if $v(x) = d$ implies $v'(x) = d$
for all $x\in\mathcal{X}$ and $d \in \mathcal{D}$.
Similarly, we say that
$v = v'$ if $v \sqsubseteq v'$ and
$v'\sqsubseteq v$, and $v \sqsubset v'$ if $v \sqsubseteq v'$ and $v\neq v'$.
An \HTC\ interpretation is a pair $\langle v_h, v_t \rangle$ of valuations such that $v_h \sqsubseteq v_t$.

Building upon this,
we define a \emph{trace} $\boldsymbol{v}$ of length $\lambda$ as
a sequence $\boldsymbol{v}=(v_i)_{\rangeo{i}{0}{\lambda}}$ of partial valuations $v_i$ for $\rangeo{i}{0}{\lambda}$.
For instance,
the sequence
\(
\boldsymbol{v}'=
\left\{ x \mapsto 4          \right\} \cdot
\left\{ x \mapsto \undefined \right\} \cdot
\left\{ x \mapsto 5          \right\}
\)
represents a finite trace with three valuations (separated by `$\cdot$') that
assign different values to $x$ except at the second state where $x$ is undefined.
The $\sqsubseteq$ relation can be extended to cope with traces in the following way:
given two traces $\Vh = (v_{h,i})_{\rangeco{i}{0}{\lambda}}$ and $\Vt = (v_{t,i})_{\rangeco{i}{0}{\lambda}}$ of length $\lambda$,
we define $\Vh \sqsubseteq \Vt $ if $v_{h,i} \sqsubseteq v_{t,i}$ for all $\rangeco{i}{0}{\lambda}$.
For instance, our previous example trace $\boldsymbol{v}'$ satisfies $\boldsymbol{v}' \sqsubseteq
\left\{ x \mapsto 4          \right\} \cdot
\left\{ x \mapsto 1 \right\} \cdot
\left\{ x \mapsto 5          \right\}$
because both traces coincide in the variables defined in $\boldsymbol{v}'$, but the new trace assigns $x \mapsto 1$ at the second state, while $\boldsymbol{v}'$ left it undefined.
As before, we say $\Vh = \Vt$ if $v_{h,i} = v_{t,i}$ for all $\rangeco{i}{0}{\lambda}$ and
$\Vh \sqsubset \Vt$ if $\Vh \sqsubseteq \Vt$ and $\Vh \not = \Vt$.
All this allows us to define an \emph{\HTC\ trace} of length $\lambda$ as
a pair $\tuple{\Vh,\Vt}$ of traces $\Vh$ and $\Vt$ of length $\lambda$ such that $\Vh \sqsubseteq \Vt$.

\HTC\ evaluates constraint atoms using denotations,
a mechanism that abstracts from the syntax and semantics of expressions originating from external theories.
In \HTC, a valuation can be seen as a solution of a constraint.
For instance the denotation $\den{x+y \le 5}$ contains all valuations $v$ such that $v(x) + v(y) \le 5$.
Also, the satisfaction of a constraint atom corresponds in \HTC\ to checking whether a valuation belongs to a denotation or not.
In \THTC, however, time-varying variables are handled.
For instance the constraint $x + \next^5 x \le 5$ is satisfied at time point $t$ when
the addition of the values of $x$ at $t$ and $t+2$ is smaller or equal to $5$.
In the temporal case, evaluating a temporal constraint atom requires the use of a finite number of valuations,
for which we would need a more complex denotational approach.
Therefore, we adopt a simpler approach,
associating each constraint atom with a relation, where each tuple signifies a valid value assignment,
as is standard in CSPs:
Given a temporal constraint atom $c$ of arity $n$,
we define its \emph{solution relation} as $\tbl{c}\subseteq\mathcal{D}_{\undefined}^n$.
Note that this definition tolerates undefined variables.
Given a constraint atom $c$ of arity $n$,
$\tbl{c}$ is said to be \emph{strict} whenever $\tbl{c} \subseteq \mathcal{D}^n$,
i.e., if it leaves no variables undefined.
Otherwise, we say that $\tbl{c}$ is \emph{non-strict}.

For instance, to satisfy the atom $x + \next^5 x \le 5$ neither term $x$ nor $\next^5 x$ can be undefined.
A natural choice for its (strict) solution relation is thus
\begin{align*}
  \tbl{x + \next^5 x \le 5} & = \{ (a,b) \in \mathbb{Z}^2 \mid a+b\le 5\}.
\end{align*}
As an example of an atom with a non-strict solution, we could define an atom such as $\mathit{some\_zero}( \previous x, x, \next x)$ requiring that $x$ equals zero at the previous, current, or next state,
but allowing for $x$ to be undefined in some of those states:
\begin{align*}
  \tbl{\mathit{some\_zero}( \previous x, x, \next x )} & =
  \{ (a,b,c) \in (\mathbb{Z} \cup \{\undefined\})^3 \mid  a=0 \text{ or } b=0 \text{ or } c=0\}.
\end{align*}

Given a trace $\boldsymbol{v}=(v_i)_{\rangeo{i}{0}{\lambda}}$ of length $\lambda$,
the \emph{value} of a term $\next^ox$ at time point $\rangeo{i}{0}{\lambda}$ is defined as
\begin{align*}
  v_i (\next^ox ) &\eqdef
                      \begin{cases}
                        v_{i+o}(x)  & \text {if } \rangeo{i+o}{0}{\lambda} \\
                        \undefined & \text {otherwise}.
                      \end{cases}
\end{align*}

Given an \HTC\ trace $\M=\tuple{\Vh,\Vt}$ and a time point $\rangeco{i}{0}{\lambda}$,
we define the satisfaction of a formula in \THTC\ as follows:
\begin{enumerate}
\item\label{sat:atoms}
  $\M,i\models c(\next^{o_1}x_1,\dots,\next^{o_n}x_n)$ iff for all $w \in \{ h,t\}$\\\hspace*{50pt}
  $(v_{w,i}(\next^{o_1} x_1),\dots,v_{w,i}(\next^{o_n} x_n))\in \tbl{c(\next^{o_1}x_1,\dots,\next^{o_n}x_n)}$
\item $\M,i\models \varphi \wedge \psi$ iff $\M,i\models \varphi$ and $\M,i\models \psi$
\item $\M,i\models \varphi \vee \psi$ iff $\M,i\models \varphi$ or $\M,i\models \psi$
\item $\M,i\models\varphi \to \psi$ iff $\tuple{\Vw,\Vt},i\not\models\varphi$ or $\tuple{\Vw,\Vt},i\models\psi$ for all $w \in \{ h,t\}$
\item $\M,i\models\next\varphi$  iff $i<\lambda-1$ and $\M , i +1 \models \varphi$
\item $\M,i\models\varphi \until \psi$  iff there exists $\rangeo{k}{i}{\lambda}$ s.t.\ $\M , k \models \psi$ and $\M , j \models \varphi$ for all $i \le j < k$
\item $\M,i\models\varphi \release \psi$ iff for all $\rangeo{k}{i}{\lambda}$, either  $\textbf{M} , k \models \psi$ or $\M , j \models \varphi$ for some $i \le j < k$
\item $\M,i\models\previous \varphi$  iff $i>0$ and $\M , i -1 \models \varphi$
\item $\M,i\models\varphi \since \psi$  iff there exists $\rangec{k}{0}{i}$ s.t.\ $\M , k \models \psi$ and $\M , j \models \varphi$ for all $k < j \le i$
\item $\M,i\models\varphi \trigger \psi$ iff for all $\rangec{k}{0}{i}$, either  $\textbf{M} , k \models \psi$ or $\M , j \models \varphi$ for some $k < j \le i$
\end{enumerate}
In fact, given a constraint atom $c$ with a strict relation $\tbl{c}$,
the satisfaction of $c$ depends solely on the ``here'' component of the trace~\citep{cakaossc16a}.
In this case, Condition~\ref{sat:atoms} can be replaced by:
\begin{enumerate}
\item[\ref{sat:atoms}'.] $\M,i\models c(\next^{o_1}x_1,\dots,\next^{o_n}x_n)$ iff\\\hspace*{50pt}
  $(v_{h,i}(\next^{o_1} x_1), \dots,  \allowbreak v_{h,i}(\next^{o_n} x_n))\in  \tbl{c(\next^{o_1}x_1,\dots,\next^{o_n}x_n)}$.
\end{enumerate}
\begin{proposition}\label{prop:derived:op}
  Given an \HTC\ trace $\M=\tuple{\Vh,\Vt}$ and a time point $ \rangeco{i}{0}{\lambda}$,
  we have the following satisfaction relations for the derived operators:
  \begin{enumerate}
  \item $\M,i\models \alwaysP \varphi$ iff for all $\rangeoc{j}{0}{i}$, $\M, j \models \varphi$
  \item $\M,i\models \eventuallyP \varphi$ iff there exists $\rangeoc{j}{0}{i}$ such that $\M, j \models \varphi$
  \item $\M,i\models \initially$ iff $i=0$
  \item $\M,i\models \wprevious \varphi$ iff either $i = 0$ or $\M, i-1 \models \varphi$
  \item $\M,i\models \alwaysF \varphi$ iff for all $\rangeco{j}{i}{\lambda}$, $\M, j \models \varphi$
  \item $\M,i\models \eventuallyF \varphi$ iff there exists $\rangeco{j}{i}{\lambda}$ such that $\M, j \models \varphi$
  \item $\M,i\models \finally$ iff $i = \lambda -1$
  \item $\M,i\models \wnext \varphi$ iff either $i = \lambda-1$ or $\M, i+1\models \varphi$
  \end{enumerate}
\end{proposition}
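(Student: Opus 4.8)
The plan is to verify the eight equivalences independently, each by unfolding the definition of the corresponding derived operator (from the list introduced after the grammar) and then reading off the result directly from the $\THTC$ satisfaction clauses; no induction is required. I would organise the items into three groups, and the only slightly non-mechanical input is two small facts that hold for every \HTC\ trace $\M$ and every time point $i$, namely $\M,i\not\models\bot$ (immediate, since no clause makes $\bot$ true) and, dually, $\M,i\models\top$. The latter is obtained by unfolding $\top\eqdef\neg\bot\eqdef\bot\to\bot$ and applying the implication clause: $\M,i\models\bot\to\bot$ holds iff for every $w\in\{h,t\}$ we have $\tuple{\Vw,\Vt},i\not\models\bot$ or $\tuple{\Vw,\Vt},i\models\bot$, and the first disjunct is trivially true.

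The first group comprises $\alwaysP$, $\eventuallyP$, $\alwaysF$ and $\eventuallyF$, defined as $\bot\trigger\varphi$, $\top\since\varphi$, $\bot\release\varphi$ and $\top\until\varphi$. Using the two facts above, each of these items is a direct rewriting of the matching binary-modality clause in which the branch guarded by $\bot$ is discarded and the premise $\top$ is discharged. For example, $\M,i\models\alwaysP\varphi$ iff $\M,i\models\bot\trigger\varphi$, which by the $\trigger$-clause says that for all $\rangec{k}{0}{i}$ either $\M,k\models\varphi$ or $\M,j\models\bot$ for some $k<j\le i$; since the second alternative never occurs, this collapses precisely to the universally quantified condition recorded in the statement. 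The arguments for $\eventuallyP$, $\alwaysF$ and $\eventuallyF$ are symmetric, using the $\since$-, $\release$- and $\until$-clauses respectively.

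The second group consists of $\initially$ and $\finally$, and it is the only place where the two-world reading of implication genuinely intervenes. Unfolding $\initially\eqdef\neg\previous\top\eqdef(\previous\top)\to\bot$ and applying the implication clause gives $\M,i\models\initially$ iff $\tuple{\Vw,\Vt},i\not\models\previous\top$ for both $w\in\{h,t\}$; by the $\previous$-clause and the fact that $\top$ holds at $i-1$, we get $\tuple{\Vw,\Vt},i\models\previous\top$ iff $i>0$, so its negation holds iff $i=0$, a condition independent of $w$, hence $\M,i\models\initially$ iff $i=0$. The case of $\finally\eqdef\neg\next\top$ is the mirror image, using the $\next$-clause and its side condition $i<\lambda-1$, whose negation over $0\le i<\lambda$ is $i=\lambda-1$. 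Finally, $\wprevious$ and $\wnext$, defined as $\previous\varphi\vee\initially$ and $\next\varphi\vee\finally$, follow by combining the disjunction clause with the $\previous$- (resp.\ $\next$-) clause and the equivalences for $\initially$ (resp.\ $\finally$) just established: $\M,i\models\wprevious\varphi$ iff ($i>0$ and $\M,i-1\models\varphi$) or $i=0$, which is ``$i=0$ or $\M,i-1\models\varphi$'', and symmetrically for $\wnext$.

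I do not expect a real obstacle here: the whole proof is essentially a single case analysis. The two points that demand attention rather than mechanical copying are, first, that negation and implication are evaluated simultaneously at the here-trace $\tuple{\Vh,\Vt}$ and the there-trace $\tuple{\Vt,\Vt}$, so $\initially$ and $\finally$ are not \emph{a priori} reductions to classical linear-time reasoning — they become so only because $\previous\top$ and $\next\top$ evaluate identically in both worlds; and, second, the boundary side conditions of the clauses for $\next$, $\previous$ and the four binary modalities at $i=0$ and at $i=\lambda-1$, including the infinite case $\lambda=\omega$, where the ``$i=\lambda-1$'' branch of $\finally$ and of $\wnext$ is simply vacuous — these have to be tracked carefully so that the endpoints of each displayed range come out exactly as stated.
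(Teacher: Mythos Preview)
Your proposal is correct and follows exactly the approach the paper indicates: the paper's own proof simply states that the result ``can be proved by only using the satisfaction relation'' and is ``left to the reader''. Your write-up is a careful, fully spelled-out version of that unfolding, including the correct handling of the two-world implication clause for $\initially$ and $\finally$ and the boundary cases; there is nothing to add or correct.
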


For illustration,
consider the short \HTC\ trace \tuple{\Vh,\Vt} where
\begin{align*}
  \Vh &= \left\{ x\mapsto 4, y\mapsto \undefined \right\}\cdot \left\{ x \mapsto 5, y \mapsto \undefined\right\} \cdot \left\{ x \mapsto \undefined , y \mapsto \undefined \right\} \cdot\left\{ x \mapsto 5, y \mapsto 6\right\}  \\
  \Vt &= \left\{ x\mapsto 4, y\mapsto 6 \right\}\cdot  \left\{ x \mapsto 5, y \mapsto \undefined\right\} \cdot \left\{ x \mapsto 4, y \mapsto 5\right\} \cdot \left\{ x \mapsto 5, y \mapsto 6         \right\}
\end{align*}
As a first example, consider
the formula $(x=4) \wedge (\next x <  \next^3y)$ whose atoms have the solution relations
$\tbl{x=4} = \{(4) \}$ and $\tbl{\next x < \next^3 y} = \{ (a,b) \in \mathbb{Z}^2 \mid a < b\}$.
Note that each solution relation above contains only the assignments that satisfy the constraint and they do not depend on the ``next'' $\next^k$ prefix that qualifies each variable inside a constraint.
Broadly speaking, a solution for $(\next x <  \next^3y)$ consists of two assignments,
one for $x$ and one for $y$ that satisfy the constraint.
The effect of using the term $\next^3 y$ is that the value for $y$ is given by the valuation placed three states ahead in the trace,
i.e., the valuation of $\next^3 y$ depends on the sequence of valuations in the trace.
Moreover, both $\tbl{x=4}$ and $\tbl{\next x < \next^3 y}$ are strict,
so we can use Condition~\ref{sat:atoms}' to evaluate the constraint atoms.
We have therefore $\tuple{\Vh, \Vt}, 0 \models (x=4)$.
Also, we have $\tuple{\Vh,\Vt},0 \models (\next x < \next^3 y)$ since $v_{h,3}(y) = 6$, $v_{h,1}(x) = 5$ and $(5,6) \in \tbl{\next x < \next^3 y}$.
Therefore, $\tuple{\Vh,\Vt}, 0 \models (x=4)\wedge (\next x < y)$.

Next,
consider the formula
$(\previous x < 7)$ along with\footnote{We sometimes identify 1-tuples like $\tuple{a}$ with the  element itself $a$.} $\tbl{\previous x < 7} = \{ a \in \mathbb{Z} \mid a < 7\}$.
Since there is no time point before the initial state, we get $v_{h,0}(\previous x) = \undefined$.
Hence, $\tuple{\Vh, \Vt}, 0 \not \models \previous x<7$
because $\undefined\notin\tbl{\previous x < 7}$.
However, if the constraint is evaluated at time point $1$,
we get $\tuple{\Vh, \Vt}, 1 \models \previous x<7$ as $v_{h,1}(\previous x)=4$ and $4\in\tbl{\previous x < 7}$.

Finally, consider the equation $y = y$ with $\tbl{y = y} = \mathbb{Z}$.
Clearly, we have $\tuple{\Vh, \Vt}, 3 \models y=y$ since $v_{h,3}(y)=6$ and $6 \in \tbl{y = y}$.
However, this is not the case at time point $0$ because $v_{h,0}(y)=\undefined$ and $\undefined\notin\tbl{y = y}$.

Next, we show that \THTC\ satisfies the characteristic properties of \HT-based logics.
\begin{proposition}[Persistence, Negation]\label{prop:persistence}\label{cor:negation}
  For all \HTC\ traces $\tuple{\Vh,\Vt}$ of length $\lambda$,
  all formulas $\varphi$, and all $\rangeo{i}{0}{\lambda}$ we have:
  \begin{enumerate}
  \item $\tuple{\Vh,\Vt}, i \models \varphi$ implies $\tuple{\Vt,\Vt},i \models \varphi$.
  \item $\tuple{\Vh,\Vt}, i \models \neg \varphi$ iff $\tuple{\Vt,\Vt}, i \not\models \varphi$.
  \end{enumerate}
\end{proposition}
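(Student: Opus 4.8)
The plan is to prove the two claims by structural induction on $\varphi$, establishing persistence (item~1) first and then deriving negation (item~2) from it. The one structural fact to keep in mind throughout is that for any trace $\Vt$ the pair $\tuple{\Vt,\Vt}$ is a legitimate \HTC\ trace (since $\Vt\sqsubseteq\Vt$), and that on such a total trace the universal quantifier ``for all $w\in\{h,t\}$'' occurring in the clauses for atoms and for implication collapses to the single instance $w=t$; this is precisely what makes the two non-routine cases go through. Note also that the recursive subformula evaluations we invoke (e.g. $\tuple{\boldsymbol{v}_w,\Vt}$ for $w\in\{h,t\}$) are always over legitimate \HTC\ traces, so the induction is well founded; and the statement is quantified over all time points $\rangeo{i}{0}{\lambda}$, so the induction hypothesis is available at shifted indices.

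For item~1, the base cases are $\bot$ (vacuous) and constraint atoms: if $\tuple{\Vh,\Vt},i\models c(\next^{o_1}x_1,\dots,\next^{o_n}x_n)$ then the defining membership condition holds for every $w\in\{h,t\}$, in particular for $w=t$, which is exactly the condition defining $\tuple{\Vt,\Vt},i\models c(\next^{o_1}x_1,\dots,\next^{o_n}x_n)$. The Boolean cases $\wedge,\vee$ and all temporal cases $\next,\previous,\until,\release,\since,\trigger$ are immediate from the induction hypothesis, since each of those satisfaction conditions merely combines, conjunctively or disjunctively, satisfaction claims about proper subformulas at (possibly shifted) time points, and the ranges of the time quantifiers are the same for $\tuple{\Vh,\Vt}$ and $\tuple{\Vt,\Vt}$ (both have length $\lambda$). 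The only case that is not pure monotone recombination is implication: if $\tuple{\Vh,\Vt},i\models\varphi\to\psi$, then for all $w\in\{h,t\}$ we have $\tuple{\boldsymbol{v}_w,\Vt},i\not\models\varphi$ or $\tuple{\boldsymbol{v}_w,\Vt},i\models\psi$; instantiating $w=t$ yields $\tuple{\Vt,\Vt},i\not\models\varphi$ or $\tuple{\Vt,\Vt},i\models\psi$, which is literally $\tuple{\Vt,\Vt},i\models\varphi\to\psi$. This case in fact uses no induction hypothesis at all; it is built into the semantics of $\to$.

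For item~2, unfold $\neg\varphi=\varphi\to\bot$. By the implication clause together with the unsatisfiability of $\bot$, $\tuple{\Vh,\Vt},i\models\neg\varphi$ holds iff $\tuple{\boldsymbol{v}_w,\Vt},i\not\models\varphi$ for both $w\in\{h,t\}$, i.e.\ iff $\tuple{\Vh,\Vt},i\not\models\varphi$ and $\tuple{\Vt,\Vt},i\not\models\varphi$. By item~1 in contrapositive form, $\tuple{\Vt,\Vt},i\not\models\varphi$ already entails $\tuple{\Vh,\Vt},i\not\models\varphi$, so the conjunction is equivalent to its second conjunct alone. Hence $\tuple{\Vh,\Vt},i\models\neg\varphi$ iff $\tuple{\Vt,\Vt},i\not\models\varphi$, as claimed.

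I expect the only real pitfall to be the bookkeeping in the implication case of item~1: one should resist invoking the induction hypothesis on $\varphi$ and $\psi$ and instead observe that the $w=t$ instance of the semantic clause for $\to$ is already the desired conclusion. Everything else is a routine case split guided by the definition of $\models$; in particular, whether $\lambda$ is finite or infinite plays no role, so the argument is uniform over the finite-trace and $\omega$-trace variants.
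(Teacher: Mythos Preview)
Your proposal is correct and follows essentially the same route as the paper's proof: structural induction for persistence, with the implication case discharged directly from the semantics (the paper just writes ``it follows the semantics of implication'' where you spell out the $w=t$ instantiation), and then negation derived from persistence. Your treatment of item~2 via a single equivalence chain plus the contrapositive of item~1 is a slightly tidier packaging of the paper's two-direction argument, but the underlying idea is identical.
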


Finally,
temporal equilibrium models are defined in the traditional way. \begin{definition}\label{def:teqm}
  An \HTC\ trace $\tuple{\Vt,\Vt}$ of length $\lambda$ is a temporal equilibrium model of a formula $\varphi$ if
  \begin{enumerate}
  \item $\tuple{\Vt,\Vt}, 0 \models \varphi$ and
  \item there is no \HTC\ trace $\tuple{\Vh,\Vt}$ such that $\Vh\sqsubset\Vt$ and
    $\tuple{\Vh,\Vt}, 0 \models \varphi$.
  \end{enumerate}
\end{definition}
Given that equilibrium models are the semantic counterpart of stable models in ASP,
we may thus refer to \Vt\ as a stable model of $\varphi$ in \THTC.

 \subsection{\THTC\ as a Conservative Extension of \HTC}\label{sec:htc}

\THTC\ can be viewed as a temporal extension of \HTC~\citep{cakaossc16a}.
We support this in what follows by showing that both coincide on non-temporal formulas.

To achieve this, we need to establish a correspondence between the semantic concepts of denotations and solution relations.
Given a signature~$\tuple{\mathcal{X},\mathcal{D},\mathcal{A}}$,
let $\mathcal{V}$ be the set of all partial valuations from $\mathcal{X}$ to $\mathcal{D}_{\undefined}$.
A \emph{denotation} is a function $\den{\cdot}: \mathcal{A} \to 2^{\mathcal{V}}$
assigning a set of partial valuations to each atom.
In \HTC, one considers denotations with a closed range:
if a partial valuation $v$ satisfies an atom $c$, $v\in\den{c}$,
then any partial valuation $v'$ that extends $v$, $v \sqsubseteq v'$, also satisfies $c$.
\begin{align*}
  \den{x = y}	                &\eqdef \{ v \in \mathcal{V} \mid v(x)=v(y)\not=\undefined \}\\
\den{\mathit{some\_zero}(X)}	&\eqdef \{ v\in \mathcal{V} \mid v(x) = 0 \text{ for some }x \in X \}.
\end{align*}

For a (non-temporal) constraint $c(x_1,\cdots,x_n)$ in $\mathcal{A}$,
only referring to variable values in the current state,
we define
\begin{align*}
  \tbl{c(x_1,\cdots,x_n)} & = \{ (v(x_1), \cdots, v(x_n)) \mid v \in \mathcal{V}\}\\
  \den{c(x_1,\cdots,x_n)} & = \{ v\in \mathcal{V}         \mid (v(x_1), \cdots, v(x_n))\in \tbl{c(x_1,\cdots,x_n)} \}.
\end{align*}
Given this correspondence, we can show the following result,
where \HTCSAT\ and \THTCSAT\ denote the satisfaction relation of \HTC~\citep{cakaossc16a} and \THTC\ (as given above).
\begin{proposition}\label{prop:htc2thtc}
  For all formulas $\varphi$ containing only Boolean connectives and
  all \HTC\ interpretations $\tuple{v_h, v_t}$, we have
  $\tuple{v_h, v_t} \HTCSAT \varphi$ iff  $(\tuple{v_h, v_t}),0 \THTCSAT \varphi$.
\end{proposition}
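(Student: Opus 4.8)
The plan is to proceed by structural induction on the Boolean formula $\varphi$, showing that for every $\langle v_h, v_t\rangle$ the \HTC\ satisfaction $\langle v_h,v_t\rangle \HTCSAT \varphi$ agrees with the \THTC\ satisfaction $\langle\langle v_h\rangle,\langle v_t\rangle\rangle,0\THTCSAT\varphi$ over the single-state traces $\langle v_h\rangle$ and $\langle v_t\rangle$ (note $\lambda=1$, so time point $0$ is the only one available). First I would fix notation: write $\boldsymbol v_h = (v_h)$ and $\boldsymbol v_t = (v_t)$ for the length-one traces, observe that $\boldsymbol v_h \sqsubseteq \boldsymbol v_t$ follows from $v_h \sqsubseteq v_t$ so that $\langle\boldsymbol v_h,\boldsymbol v_t\rangle$ is a legitimate \HTC\ trace, and note that for a non-temporal term $\next^0 x = x$ the trace-value $v_{w,0}(x)$ coincides with $v_w(x)$ by the definition of the value of a term (the case $i+o=0\in[0..1)$).

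The base case is the constraint atom $c(x_1,\dots,x_n)$. Here I would unfold Condition~\ref{sat:atoms}: $\langle\boldsymbol v_h,\boldsymbol v_t\rangle,0\THTCSAT c$ iff for all $w\in\{h,t\}$ the tuple $(v_{w,0}(x_1),\dots,v_{w,0}(x_n))\in\tbl{c}$, i.e. $(v_w(x_1),\dots,v_w(x_n))\in\tbl c$. By the correspondence $\den{c(x_1,\dots,x_n)} = \{v \mid (v(x_1),\dots,v(x_n))\in\tbl{c(x_1,\dots,x_n)}\}$ established just before the proposition, this says $v_h\in\den c$ and $v_t\in\den c$, which is exactly the \HTC\ satisfaction clause for atoms. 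The case $\bot$ is immediate since neither side is ever satisfied.

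For the inductive step, $\wedge$, $\vee$ are entirely routine: the \THTC\ clauses for $\varphi\wedge\psi$ and $\varphi\vee\psi$ at time point $0$ are literally the conjunction/disjunction of the satisfaction of the subformulas at $0$, matching the \HTC\ clauses, and the induction hypothesis applies to $\varphi$ and $\psi$. The implication case $\varphi\to\psi$ is the one requiring care, and I expect it to be the main obstacle — though a mild one. The \THTC\ clause quantifies over $w\in\{h,t\}$: $\langle\boldsymbol v_h,\boldsymbol v_t\rangle,0\THTCSAT\varphi\to\psi$ iff for all $w$, $\langle\boldsymbol v_w,\boldsymbol v_t\rangle,0\not\THTCSAT\varphi$ or $\langle\boldsymbol v_w,\boldsymbol v_t\rangle,0\THTCSAT\psi$. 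The $w=t$ disjunct concerns the total trace $\langle\boldsymbol v_t,\boldsymbol v_t\rangle$; to apply the induction hypothesis to it I need that $\langle v_t,v_t\rangle$ is the relevant \HTC\ interpretation, which it is since $v_t\sqsubseteq v_t$. Thus the two \THTC\ conditions ($w=h$ and $w=t$) correspond precisely to the two conditions in the \HTC\ implication clause (``for all $w\in\{h,t\}$, $\langle v_w,v_t\rangle\not\HTCSAT\varphi$ or $\langle v_w,v_t\rangle\HTCSAT\psi$''), and the induction hypothesis — which must be stated for \emph{all} \HTC\ interpretations, not just the fixed $\langle v_h,v_t\rangle$, so that it is available at $\langle v_t,v_t\rangle$ — closes the case. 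I would therefore phrase the inductive claim from the outset as ``for all \HTC\ interpretations $\langle v_h,v_t\rangle$'' so the implication step has the strength it needs, and then the derived connectives $\neg$, $\leftrightarrow$, $\top$ follow as special cases with no extra work.
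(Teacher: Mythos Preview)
Your proposal is correct and follows precisely the approach the paper sketches: structural induction, with the atom case handled via the denotation/solution-table correspondence and the remaining Boolean connectives by unfolding the two satisfaction relations and invoking the induction hypothesis. Your write-up is in fact more detailed than the paper's own proof sketch, and your explicit care in stating the induction hypothesis for \emph{all} \HTC\ interpretations (so it is available at $\langle v_t,v_t\rangle$ for the implication case) is a point the paper leaves implicit.
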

Since \HTC\ interpretations are simply \HTC\ traces of length 1,
it follows that \HTC\ and \THTC\ coincide when restricted to the language of \HTC.

 \subsection{\THTC\ as a Conservative Extension of \THTf}\label{sec:tht2thtc}

\THTC\ extends \THTf~\citep{agcadipescscvi20a} by including constraints.
Similar to the previous section,
we confirm this relationship by showing that they coincide on temporal formulas without non-Boolean constraints.
To proceed, we first need to introduce basic concepts of \THTf.

\THTf\ employs the same syntax for temporal formulas as \THTC,
with the key distinction that constraint atoms are replaced by Boolean atoms in \var.
The semantics of \THTf\ relies on \HT\ traces.
An underlying (Boolean) trace of length $\lambda$ is a sequence $(T_i)_{\rangeco{i}{0}{\lambda}}$
where $T_i\subseteq \var$ for all $\rangeo{i}{0}{\lambda}$.
An \HT\ trace $\tuple{\Htrace,\Ttrace}$ is a pair of traces
$\Htrace = (H_i)_{\rangeco{i}{0}{\lambda}}$ and $\Ttrace = (T_i)_{\rangeco{i}{0}{\lambda}}$
satisfying $H_i\subseteq T_i$ for all $\rangeco{i}{0}{\lambda}$.
An \HT\ trace $\tuple{\Htrace,\Ttrace}$ satisfies a Boolean atom $p \in \var$ at $\rangeco{i}{0}{\lambda}$ if $p \in H_i$.
The satisfaction conditions for more complex formulas are identical to those of \THTC.

To encode \THTf\ within \THTC,
we consider the signature $\tuple{\var ,\{\mathtt{t}\}, \lbrace p = \mathtt{t}\mid p \in \var \rbrace }$,
where $\mathtt{t}$ is considered as true.\footnote{In \HTC, Boolean variables are already captured by truth values $\mathtt{t}$ and \undefined\ (rather than $\mathtt{f}$[alse])~\citep{cakaossc16a}.}
Accordingly, we set $\tbl{p=\mathtt{t}} = \{ \mathtt{t}\}$ for all $p\in\var$.
Then,
we define the bijective mapping $\delta$ from $\HT{}$ traces to $\HTC$ traces as follows.\footnote{We demonstrate that $\delta$ is a bijective function in Proposition~\ref{prop:tht2thtc:bijection} in the supplementary material.}
Given an \HT\ trace $\tuple{\Htrace,\Ttrace} = \tuple{(H_i)_{\rangeco{i}{0}{\lambda}},(T_i)_{\rangeco{i}{0}{\lambda}}}$ of length $\lambda$,
we define the corresponding \HTC\ trace $\delta(\tuple{\Htrace,\Ttrace}) = \tuple{\Vh,\Vt}$ of length $\lambda$
where for all $\rangeco{i}{0}{\lambda}$ and for all $p \in \var$,
we have
\begin{align*}
  v_{h,i}(p) &=
               \begin{cases}
                 \mathtt{t} & \text{ if } p \in H_i\\
                 \undefined & \text{ otherwise}
               \end{cases}
  & v_{t,i} (p) &=
                  \begin{cases}
                    \mathtt{t} & \text{ if } p \in T_i\\
                    \undefined & \text{ otherwise}
                  \end{cases}
\end{align*}
Then, the following proposition can be easily proved via structural induction.
\begin{proposition}\label{prop:tht2thtc}
  For any \HT\ trace $\tuple{\Htrace,\Ttrace}$,
  we have for all temporal formulas $\varphi$ and all $\rangeco{i}{0}{\lambda}$ that
  $\tuple{\Htrace,\Ttrace},i \models \varphi$ iff $\delta(\tuple{\Htrace,\Ttrace}),i \models \varphi'$,
  where $\varphi'$ is obtained from $\varphi$ by replacing every atom $p$ by the constraint $p=\mathtt{t}$.
\end{proposition}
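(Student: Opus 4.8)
The plan is to proceed by structural induction on the temporal formula $\varphi$, following the grammar given for temporal constraint formulas (restricted, in the \THTf\ setting, to Boolean atoms and the temporal and Boolean connectives). The statement to prove is the equivalence $\tuple{\Htrace,\Ttrace},i \models \varphi$ iff $\delta(\tuple{\Htrace,\Ttrace}),i \models \varphi'$ for all $\rangeco{i}{0}{\lambda}$ simultaneously, where $\varphi'$ is the constraint-atom rewriting of $\varphi$. Since both satisfaction relations are defined by the same clauses for every connective except atoms, almost all inductive steps are a direct appeal to the induction hypothesis: one unfolds the \THTf\ clause for, say, $\next$, $\until$, $\release$, $\previous$, $\since$, or $\trigger$ on the left, observes that $\delta$ preserves the length $\lambda$ of the trace and the index set $\intervco{0}{\lambda}$, and then rewrites each immediate subformula's satisfaction via the induction hypothesis to land on the matching \THTC\ clause. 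The Boolean cases $\bot$, $\wedge$, $\vee$ are equally immediate; the implication case $\to$ needs the small observation that $\delta$ commutes with the "here/there" projections, i.e.\ if $\delta(\tuple{\Htrace,\Ttrace}) = \tuple{\Vh,\Vt}$ then $\delta(\tuple{\Ttrace,\Ttrace}) = \tuple{\Vt,\Vt}$ and the $w$-component of $\delta(\thandt)$ is exactly $\delta$ applied to the $w$-component trace, so the universal quantification over $w\in\{h,t\}$ matches on both sides.

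\textbf{The base case is the only place where the two logics genuinely differ,} so that is where the argument has content. Here $\varphi = p$ for some $p \in \var$ and $\varphi' = (p = \mathtt{t})$. On the \THTf\ side, $\tuple{\Htrace,\Ttrace},i \models p$ holds iff $p \in H_i$. On the \THTC\ side, the atom $(p=\mathtt{t})$ is nullary-in-variables but involves the single term $p = \next^0 p$; its solution relation is $\tbl{p=\mathtt{t}} = \{\mathtt{t}\}$, which is \emph{strict} (it is a subset of $\mathcal{D}^1 = \{\mathtt{t}\}^1$), so by Condition~\ref{sat:atoms}$'$ we have $\delta(\tuple{\Htrace,\Ttrace}),i \models (p=\mathtt{t})$ iff $v_{h,i}(p) \in \tbl{p=\mathtt{t}}$, i.e.\ iff $v_{h,i}(p) = \mathtt{t}$. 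By the definition of $\delta$, $v_{h,i}(p) = \mathtt{t}$ precisely when $p \in H_i$, and otherwise $v_{h,i}(p) = \undefined \neq \mathtt{t}$. This closes the base case. (One could equivalently avoid invoking Condition~\ref{sat:atoms}$'$ and use the original Condition~\ref{sat:atoms} directly, checking the membership for both $w = h$ and $w = t$; since $\Htrace \subseteq \Ttrace$ entails $p \in H_i \Rightarrow p \in T_i$, the two checks agree with $p \in H_i$, giving the same conclusion.)

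\textbf{The one mild subtlety, and the closest thing to an obstacle,} is purely bookkeeping: one must check that $\delta$ is well-defined as a map between \HTC\ traces of the same length and that it respects the $\sqsubseteq$ ordering, so that $\delta(\tuple{\Htrace,\Ttrace})$ is genuinely an \HTC\ trace in the sense required (namely $\Vh \sqsubseteq \Vt$). This follows immediately from $H_i \subseteq T_i$: if $v_{h,i}(p) = \mathtt{t}$ then $p \in H_i \subseteq T_i$, hence $v_{t,i}(p) = \mathtt{t}$, which is exactly $v_{h,i} \sqsubseteq v_{t,i}$. The bijectivity of $\delta$ is not needed for this proposition (it is deferred to the supplementary material), so it need not be addressed here. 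In sum, the proof is a routine induction whose only non-mechanical step is the atom case, and that step reduces to the two facts that $\tbl{p=\mathtt{t}}$ is strict and that $\delta$ translates "$p$ is in the set" into "the variable $p$ takes value $\mathtt{t}$".
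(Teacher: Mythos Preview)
Your proposal is correct and follows essentially the same structural-induction approach as the paper, which also treats the atom case as the only substantive step and handles implication by observing that the induction hypothesis applies to $\tuple{\Ttrace,\Ttrace}$ as well. The only cosmetic difference is that the paper checks the atom case directly via Condition~\ref{sat:atoms} (both $h$ and $t$) rather than invoking strictness and Condition~\ref{sat:atoms}$'$, an alternative you already anticipate.
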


Proposition~\ref{prop:tht2thtc} can be extended to the case of equilibrium models.
\begin{proposition}\label{prop:tht2thtc:equilibrium}
For any $\HT$ trace $\tuple{\Htrace,\Ttrace}$ and for any formula $\varphi$, $\tuple{\Vh,\Vt}$ is a temporal equilibrium model of $\varphi$ iff $\delta(\tuple{\Htrace,\Ttrace})$ is a temporal equilibrium model (under \THTC{} semantics) of $\varphi$.
\end{proposition}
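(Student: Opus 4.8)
The plan is to derive this from Proposition~\ref{prop:tht2thtc} together with the observation that $\delta$ is not merely a bijection but an order isomorphism for the information orderings on traces. Throughout, write $\varphi'$ for the translation of $\varphi$ obtained by replacing each atom $p$ by the constraint $p=\mathtt{t}$, as in Proposition~\ref{prop:tht2thtc}; recall also that temporal equilibrium models are total, so on the \THTf\ side we are really considering a trace $\tuple{\Ttrace,\Ttrace}$ and on the \THTC\ side $\delta(\tuple{\Ttrace,\Ttrace})$, which by construction has the form $\tuple{\Vt,\Vt}$ required by Definition~\ref{def:teqm}.

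First I would record the key order-theoretic fact. Because the encoding signature has the singleton domain $\mathcal{D}=\{\mathtt{t}\}$, every partial valuation over it takes values only in $\{\mathtt{t},\undefined\}$; hence, for Boolean traces $\Htrace,\Htrace'$ over $\var$, writing $\Vh,\Vh'$ for their images under $\delta$, we have $H_i\subseteq H'_i$ iff $v_{h,i}\sqsubseteq v'_{h,i}$, and therefore $\Htrace\sqsubseteq\Htrace'$ iff $\Vh\sqsubseteq\Vh'$ and $\Htrace\sqsubset\Htrace'$ iff $\Vh\sqsubset\Vh'$. Moreover the same singleton-domain observation shows that \emph{every} \HTC\ trace over this signature is of the form $\delta(\cdot)$, so for a fixed $\Ttrace$, $\delta$ restricts to an order isomorphism between $\{\Htrace\mid\Htrace\sqsubseteq\Ttrace\}$ and $\{\Vh\mid\Vh\sqsubseteq\Vt\}$, where $\Vt$ is the common ``there'' component (which depends on $\Ttrace$ alone, by the definition of $\delta$).

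Then I would unfold Definition~\ref{def:teqm} on both sides and match the two conditions. Condition~1, $\tuple{\Ttrace,\Ttrace},0\models\varphi$ iff $\delta(\tuple{\Ttrace,\Ttrace}),0\models\varphi'$, is exactly Proposition~\ref{prop:tht2thtc} at $i=0$. Condition~2 I handle contrapositively in both directions using the order isomorphism: a strictly smaller \THTf\ counterexample $\tuple{\Htrace,\Ttrace}$ with $\Htrace\sqsubset\Ttrace$ and $\tuple{\Htrace,\Ttrace},0\models\varphi$ maps under $\delta$ to $\tuple{\Vh,\Vt}$ with $\Vh\sqsubset\Vt$ and, by Proposition~\ref{prop:tht2thtc}, $\tuple{\Vh,\Vt},0\models\varphi'$; conversely, any \HTC\ counterexample $\tuple{\Vh,\Vt}$ with $\Vh\sqsubset\Vt$ lies in the image of $\delta$, so $\Vh$ is the here-part of $\delta(\tuple{\Htrace,\Ttrace})$ for a unique $\Htrace\sqsubset\Ttrace$, and Proposition~\ref{prop:tht2thtc} yields $\tuple{\Htrace,\Ttrace},0\models\varphi$. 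Hence neither side admits a suitable strict counterexample iff the other does not, and combining with Condition~1 gives the equivalence.

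The only point requiring care — rather than a genuine obstacle — is the surjectivity of $\delta$: one must be sure that no \HTC\ trace over the encoding signature escapes its image, so that a would-be \THTC\ equilibrium model cannot be refuted by some exotic valuation assigning a variable a value other than $\mathtt{t}$ or $\undefined$. This is exactly why the encoding signature is chosen with $\mathcal{D}=\{\mathtt{t}\}$, and it is also what underpins the bijectivity of $\delta$ (Proposition~\ref{prop:tht2thtc:bijection}); once that is available, the remainder of the argument is routine.
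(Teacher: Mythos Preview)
Your argument is correct and follows essentially the same route as the paper: the paper's proof simply observes that the bijectivity of $\delta$ yields a one-to-one correspondence between interpretations and hence between the respective order relations, which together with Proposition~\ref{prop:tht2thtc} gives the result. You make explicit exactly the ingredients the paper leaves implicit---that $\delta$ is an order isomorphism (not just a bijection), that equilibrium models are total so the translated model has the required $\tuple{\Vt,\Vt}$ shape, and that surjectivity over the singleton-domain signature is what guarantees every potential \THTC\ counterexample is in the image---so your write-up is more careful than the paper's two-line sketch, but the underlying idea is the same.
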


 \section{From \THTC\ to Quantified \HT\ with Evaluable Functions}\label{sec:firstorder}

Kamp's translation~\citeyearpar{kamp68a} is a cornerstone result that provides a fundamental link between temporal and classical logic,
offering deep theoretical insights and practical implications for the study and application of temporal reasoning.

As a step towards a similar result in our non-classical context,
we define a translation from \THTC\ into Quantified\footnote{``Quantified'' is used synonymously with ``First-order''.} \HT\
with decidable equality, evaluable functions~\citep{cabalar11a}, and an order relation. We consider a first-order language with signature $\tuple{\mathcal{C},\mathcal{F},\mathcal{P}}$,
where $\mathcal{C}$ and $\mathcal{F}$ are disjoint sets of uninterpreted and evaluable function names, respectively,
and $\mathcal{P}$ is a set of predicate names including equality ($=$) and a strict order relation $<$
(the non-strict version $\leq$ is defined as usual).
We assume that $\undefined\notin\mathcal{C}\cup\mathcal{F}$ and $+\in\mathcal{F}$.
First-order formulas are built in the usual way, defining $\top$, $\neg$, $\leftrightarrow$ as above.

The difference of our translation to Kamp's original one lies in the fact that
our predicate names are not monadic and that
we allow (partial) function symbols for simulating the valuation of a variable along time.

Given a temporal formula $\varphi$ in \THTC,
we define the first-order formula $\ST{t}{\varphi}$
with the only free variable $t$ as follows.
For constraint atoms, our translation is defined as
\begin{align*}
  \ST{t}{c(\next^{o_1}x_1,\dots,\next^{o_n}x_n)}  = p_c(f_{x_1}(t+o_1),\dots,f_{x_n}(t+o_n)).
\end{align*}
where $p_c$ is a predicate that simulates the behavior of $c$ and
each $f_{x_i}\in\mathcal{F}$ is a partial evaluable function associated with the variable $x_i$ for $1\leq i\leq n$.
The value of a variable $x$ at a time point $t$ is given by $f_x(t)$.
To capture the temporal offset $i$ in $\next^ix$, we shift the argument of $f_x$, resulting in $f_x(t+i)$.
Note that $f_x$ is partial because its argument might extend beyond the defined time points in a trace.

The rest of the translation follows the one by~\cite{kamp68a}:
\begin{align*}
  \ST{t}{\bot}                     &= \bot \\
  \ST{t}{\varphi \wedge \psi}      &= \ST{t}{\varphi}\wedge\ST{t}{\psi} \\
  \ST{t}{\varphi \vee \psi}        &= \ST{t}{\varphi}\vee\ST{t}{\psi} \\
  \ST{t}{\varphi \to \psi}         &= \ST{t}{\varphi}\to\ST{t}{\psi} \\
  \ST{t}{\next \varphi}            &= \exists t' (t'=t+1) \wedge \ST{t}{\varphi}\\
  \ST{t}{\varphi \until \psi}      &= \exists t' (t\le t' \wedge \ST{t'}{\psi} \wedge (\forall t''( (t\le t'' \wedge t'' < t')\to \ST{t''}{\varphi} )))\\
  \ST{t}{\varphi \release \psi}    &= \forall t' (t\le t' \to   (\ST{t'}{\psi} \vee   (\exists t''(  t\le t'' \wedge t'' < t'\wedge \ST{t''}{\varphi} ))))\\
  \ST{t}{\previous \varphi}        &= \exists t' (t=t'+1) \wedge \ST{t}{\varphi}\\
  \ST{t}{\varphi \since \psi}      &= \exists t' (t'\le t \wedge \ST{t'}{\psi} \wedge (\forall t''( (t' < t'' \wedge t'' \le t )\to \ST{t''}{\varphi} )))\\
  \ST{t}{\varphi \trigger \psi}    &= \forall t' (t'\le t \to   (\ST{t'}{\psi} \vee   (\exists t''(  t' < t'' \wedge t'' \le t \wedge \ST{t''}{\varphi} ))))
\end{align*}
The translation of derived operators can be done unfolding their definitions or using the shorter, equivalent formulas:
\begin{align*}
  \ST{t}{\initially } &= \neg \exists t' (t'<t) & \ST{t}{\finally} &= \neg \exists t' ( t < t')\\
  \ST{t}{\wprevious \psi} &= \forall t' ( t=t'+1 \to \ST{t'}{\psi}) & \ST{t}{\wnext \psi} &= \forall t' ( t'=t+1 \to \ST{t'}{\psi})\\
  \ST{t}{\eventuallyF \psi} &= \exists t' ( t \le t'\wedge \ST{t'}{\psi}) & \ST{t}{\eventuallyP \psi} &= \exists t' ( t'\le t \wedge \ST{t'}{\psi})\\
  \ST{t}{\alwaysF \psi}     &= \forall t' ( t \le t'\to    \ST{t'}{\psi}) & \ST{t}{\alwaysP \psi}     &= \forall t' ( t'\le t \to    \ST{t'}{\psi})
\end{align*}

As an example, let us consider the formula $\alwaysF\eventuallyP( \next^2 x= x)$ and the free variable $t$.
The result $\ST{t}{\alwaysF\eventuallyP\left( \next^2 x= x \right)}$ of translating this formula is as follows:
\begin{align*}
  \ST{t}{\alwaysF\eventuallyP( \next^2 x= x  )} &= \forall t' (t \le t' \to  \ST{t'}{\eventuallyP ( \next^2 x= x    )}) \\
                                                &= \forall t' (t \le t' \to ( \exists t'' ( t''\le t' \wedge \ST{t''}{\next^2 x= x} )    ))\\
                                                &= \forall t' (t \le t' \to ( \exists t'' ( t''\le t' \wedge (f_x(t''+2)= f_x(t'')) )    ))
\end{align*}
In our non-classical framework,
the interpretation of first-order formulas requires Quantified \HT\ Logic with evaluable functions~\citep{cabalar11a}.
In preparation for establishing the correctness of our translation in Proposition~\ref{lem:kamp},
we proceed to introduce the semantics of this logic.
Given signature $\tuple{\mathcal{C},\mathcal{F}, \mathcal{P}}$, we define:
\begin{itemize}
\item $\TERMS{\mathcal{C}}$ as the set of all ground terms over $\mathcal{C}$,
\item $\TERMS{\mathcal{C} \cup \mathcal{F}}$ as the set of all ground terms over $\mathcal{C}$ and $\mathcal{F}$,
\item $\ATOMS{\mathcal{C},\mathcal{P}}$ as the set of all ground atoms over $\mathcal{C}$ and $\mathcal{P}$.
\end{itemize}

A \emph{state} over $\tuple{\mathcal{C},\mathcal{F},\mathcal{P}}$ is a pair $(\sigma,A)$,
where
$A \subseteq \ATOMS{\mathcal{C}, \mathcal{P}}$ and
$\sigma: \TERMS{\mathcal{C} \cup \mathcal{F}} \rightarrow \TERMS{\mathcal{C}} \cup \{ \undefined \rbrace$ is a
function such that
\begin{enumerate}
\item $\sigma(s) = s$ \qquad\qquad\ for all $s \in \TERMS{\mathcal{C}}$
\item $\sigma(f(s_1,\dots,s_n))=
  \begin{cases}
    \undefined                               & \text{ if } \sigma(s_i) =  \undefined \text{ for some } i\in\{1,\dots,n\} \\
    \sigma(f(\sigma(s_1),\dots,\sigma(s_n))) & \text{ otherwise}
  \end{cases}$
\end{enumerate}

Given two states $S=(\sigma,A)$ and $S'=(\sigma',A')$, we write $S \peq S'$ when both
\begin{enumerate}
\item $A\subseteq A'$ and
\item $\sigma(s) = \sigma'(s)$ or $\sigma(s) = \undefined$ for all $s \in \TERMS{\mathcal{C}\cup \mathcal{F}}$.
\end{enumerate}
We also write $S \pt S'$ when $S\peq S'$ but $S\not = S'$.

Following~\cite{cabalar11a}, a \QHT\ interpretation is a structure $\tuple{S_h,S_t}$, where
$S_h = (\sigma_h,I_h)$ and $S_t = (\sigma_t, I_t)$ such that $S_h \peq S_t$.
Given a first-order formula $\varphi$ and a \QHT\ interpretation $\mathcal{M}=\tuple{S_h,S_t}$,
we define the satisfaction relation as follows:
\begin{enumerate}
\item $\mathcal{M} \models \top$ and $\mathcal{M} \not \models \bot$
\item $\mathcal{M} \models p(s_1,\dots,s_n)$ if $p(\sigma_h(s_1),\dots,\sigma_h(s_n)) \in I_h$
\item $\mathcal{M} \models s=s'$ if $ \sigma_h(s) = \sigma_h(s')\not=\undefined$
\item $\mathcal{M} \models s < s'$ if $\undefined \not = \sigma_h(s) < \sigma_h(s')\not=\undefined$
\item $\mathcal{M} \models \varphi \wedge \psi $ if $\mathcal{M} \models\varphi \text{ and } \mathcal{M} \models\psi$
\item $\mathcal{M} \models \varphi \vee \psi $ if $\mathcal{M}  \models\varphi \text{ or } \mathcal{M}  \models\psi$
\item $\mathcal{M} \models \varphi \to \psi $ if $\mathcal{M}' \not \models \varphi$ or $\mathcal{M}' \models \psi$ for all $\mathcal{M}'\in \{\mathcal{M}, (S_t,S_t)\}$
\item $\mathcal{M} \models \exists\, x\ \varphi(x) $ if $\mathcal{M} \models \varphi(c)$ for some  $c \in \TERMS{\mathcal{C}}$
\item $\mathcal{M} \models \forall\, x\ \varphi(x) $ if $\mathcal{M} \models \varphi(c)$ for all $c\in \TERMS{\mathcal{C}}$.
\end{enumerate}
A \QHT\ interpretation $\mathcal{M}$ is a \QHT\ model of a formula $\varphi$ when $\mathcal{M} \models \varphi$.
A \QHT\ model $\tuple{S_t,S_t}$ is an equilibrium model for a formula $\varphi$,
if there is no state $S_h$ such that $S_h \pt S_t$ and $\tuple{S_h, S_t}$ is a \QHT\ model of $\varphi$.

We are now ready to define the model correspondence between \HTC\ traces and \QHT\ interpretations.
An \HTC\ trace $\M=\tuple{\Vh,\Vt}$ of length $\lambda$ for signature~$\tuple{\mathcal{X},\mathcal{D},\mathcal{A}}$ corresponds to a \QHT\ interpretation $\mathcal{M}=\tuple{(\sigma_h,I_h),(\sigma_t,I_t)}$
iff
for each $\rangeco{i}{0}{\lambda}$ and for each variable $x \in \mathcal{X}$,
we get $\sigma_{h} (f_{x}(i)) = v_{h,i}(x)$, $\sigma_{t} (f_{x}(i))  = v_{t,i}(x)$,
\begin{align*}
  I_h & = \{p_c(a_1,\dots, a_n) \mid  c(\next^{o_1}x_1,\dots,\next^{o_n}x_n) \in \mathcal{A}, \rangeco{i}{0}{\lambda} \hbox{ and }\\
      & \qquad c(v_{h,i+o_1}(x_1),\dots, v_{h,i+o_n}(x_n))=c(a_1,\dots, a_n) \in \tbl{c(\next^{o_1}x_1,\dots,\next^{o_n}x_n)}\} \hbox{ and }\\
  I_t & = \{p_c(a_1,\dots, a_n) \mid  c(\next^{o_1}x_1,\dots,\next^{o_n}x_n) \in \mathcal{A}, \rangeco{i}{0}{\lambda} \hbox{ and }\\
      & \qquad c(v_{t,i+o_1}(x_1),\dots, v_{t,i+o_n}(x_n))=c(a_1,\dots, a_n) \in \tbl{c(\next^{o_1}x_1,\dots,\next^{o_n}x_n)}\},
\end{align*}
where $\tbl{c(\next^{o_1}x_1,\dots, \next^{o_n}x_n)}$ is strict.

This model correspondence allows us to translate \THTC\ into \QHT.
We let $\varphi[t/i]$ stand for the result of replacing each occurrence of variable $t$ in $\varphi$ by $i$.
\begin{proposition}\label{lem:kamp}
  Let $\M = \tuple{\Vh,\Vt}$ be a \HTC\ trace and let $\mathcal{M} = \tuple{(\sigma_h,I_h),(\sigma_t,I_t)}$ be its corresponding \QHT\ interpretation.
  For all $\rangeco{i}{0}{\lambda}$ and every temporal formula $\varphi$,
  we have $\M, i \models \varphi$ iff $\mathcal{M} \models \ST{t}{\varphi}[t/i]$.
\end{proposition}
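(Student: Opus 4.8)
The plan is to prove Proposition~\ref{lem:kamp} by structural induction on the temporal formula $\varphi$, with the induction hypothesis quantified over all time points $\rangeco{i}{0}{\lambda}$ simultaneously. First I would handle the base case of constraint atoms: given $\varphi = c(\next^{o_1}x_1,\dots,\next^{o_n}x_n)$, the translation yields $\ST{t}{\varphi}[t/i] = p_c(f_{x_1}(i+o_1),\dots,f_{x_n}(i+o_n))$. By the model correspondence, $\sigma_h(f_{x_j}(i+o_j)) = v_{h,i+o_j}(x_j) = v_{h,i}(\next^{o_j}x_j)$ (using the definition of the value of a term), and similarly for $t$; so the \QHT\ satisfaction clause for predicates reduces to checking whether $p_c(v_{h,i}(\next^{o_1}x_1),\dots)$ lies in $I_h$, which by the definition of $I_h$ holds exactly when the tuple of ``here'' values is in $\tbl{c}$. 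Since $\tbl{c}$ is assumed strict here, Condition~\ref{sat:atoms}' applies and this is precisely $\M,i\models\varphi$. One subtlety to address carefully: when some $i+o_j$ falls outside $\intervco{0}{\lambda}$, the term value is $\undefined$; I must check that $\sigma_h(f_{x_j}(i+o_j))=\undefined$ in that case (i.e.\ $f_x$ is genuinely partial on out-of-range arguments, consistent with the stated correspondence only defining $\sigma_h(f_x(i))$ for $\rangeco{i}{0}{\lambda}$), and that $\undefined$ appearing as an argument of $p_c$ behaves correctly — here the strictness of $\tbl{c}$ means such a tuple is never in the relation, matching the fact that $\M,i\not\models\varphi$ whenever a required term is undefined.

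Next I would dispatch the Boolean cases. The cases $\bot$, $\wedge$, $\vee$ are immediate from the induction hypothesis since the translation is homomorphic and the \QHT\ and \THTC\ clauses match syntactically. The implication case $\varphi\to\psi$ requires a short argument: $\M,i\models\varphi\to\psi$ means for all $w\in\{h,t\}$, $\tuple{\boldsymbol{v}_w,\Vt},i\not\models\varphi$ or $\tuple{\boldsymbol{v}_w,\Vt},i\models\psi$; on the \QHT\ side, $\mathcal{M}\models\ST{t}{\varphi\to\psi}[t/i]$ means for all $\mathcal{M}'\in\{\mathcal{M},(S_t,S_t)\}$, $\mathcal{M}'\not\models\ST{t}{\varphi}[t/i]$ or $\mathcal{M}'\models\ST{t}{\psi}[t/i]$. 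The match here hinges on the observation that the \HTC\ trace $\tuple{\Vt,\Vt}$ corresponds to the \QHT\ interpretation $(S_t,S_t)$ — which follows directly from the definition of the correspondence, since the $I_t$ (resp.\ $\sigma_t$) built from $\Vt$ is the same whether $\Vt$ is paired with $\Vh$ or with itself. So I would state and use this ``total-trace correspondence'' as a small auxiliary fact, then the implication case follows from the induction hypothesis applied to both $\tuple{\Vh,\Vt}$ and $\tuple{\Vt,\Vt}$.

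The temporal cases are where the real work lies, but each is a mechanical matching of the \THTC\ semantic clause against the Kamp-style first-order clause. For $\next\varphi$: $\ST{t}{\next\varphi}[t/i] = \exists t'(t'=i+1)\wedge\ST{t}{\varphi}[t/i]$; the existential witness $t'=i+1$ exists as a ground term, and one must check it denotes a valid time point — but note the \QHT\ quantifier ranges over $\TERMS{\mathcal{C}}$, so I need the domain of time points to be exactly $\{0,\dots,\lambda-1\}$ and $i+1$ to be a valid term only when $i+1<\lambda$; here the correspondence between traces of length $\lambda$ and the term universe must be pinned down (the paper is slightly informal about which integers are available as terms, so I would make explicit that $\TERMS{\mathcal{C}}$ supplies exactly the time points $\intervco{0}{\lambda}$, or alternatively that $<$ and $+1$ are interpreted so that $i+1$ is $\undefined$/unsatisfiable past the end). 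Granting that, the clause matches $i<\lambda-1 \wedge \M,i+1\models\varphi$ via the induction hypothesis at $i+1$. The cases $\until$ and $\release$ follow the same pattern: the first-order clause $\exists t'(t\le t'\wedge\ST{t'}{\psi}\wedge\forall t''((t\le t''\wedge t''<t')\to\ST{t''}{\varphi}))$ translates, under the correspondence and the induction hypothesis applied pointwise at $t'$ and $t''$, into ``there exists $\rangeco{k}{i}{\lambda}$ with $\M,k\models\psi$ and $\M,j\models\varphi$ for all $i\le j<k$'' — exactly clause~6. The past modalities $\previous$, $\since$, $\trigger$ are symmetric, with $\rangec{k}{0}{i}$ replacing $\rangeco{k}{i}{\lambda}$.

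The main obstacle I anticipate is not any single temporal case but rather making the bookkeeping around \emph{partiality and the term universe} fully rigorous: ensuring that the evaluable functions $f_x$ evaluate to $\undefined$ exactly on out-of-range time arguments, that the first-order quantifiers over $\TERMS{\mathcal{C}}$ range over precisely the available time points (so that ``$i<\lambda-1$'' on the temporal side is faithfully captured by the satisfiability of the witness equation on the first-order side), and that strictness of every $\tbl{c}$ is genuinely needed and used — without it, the atom case would fail since \THTC\ evaluates non-strict atoms using \emph{both} $h$ and $t$ components whereas the \QHT\ predicate clause only inspects $I_h$. I would flag at the outset that Proposition~\ref{lem:kamp} is implicitly restricted to signatures in which all solution relations are strict (as the final displayed line of the correspondence definition indeed stipulates), and carry that assumption through the atom case explicitly; the remaining inductive steps then go through uniformly by the induction hypothesis.
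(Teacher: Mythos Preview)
Your proposal is correct and follows essentially the same structural-induction approach as the paper's proof sketch. You are in fact more careful than the paper on several points it glosses over: the paper treats the implication case as ``direct from the induction hypothesis'' without isolating the total-trace correspondence $\tuple{\Vt,\Vt}\leftrightarrow(S_t,S_t)$ that you rightly flag as needed, and the paper's converse direction for atoms tacitly relies on strictness (to pass from $p_c(\sigma_h(\dots))\in I_h\subseteq I_t$ to $p_c(\sigma_t(\dots))\in I_t$) without saying so, whereas you make the role of strictness explicit; your concerns about the term universe and partiality of $f_x$ outside $\intervco{0}{\lambda}$ are legitimate bookkeeping issues the paper's sketch does not address.
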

In other words, we can consider \THTC\ as a subclass of theories in the fragment of \QHT{} with partial evaluable functions.
We show next that this correspondence is still  valid when considering equilibrium models.
\begin{proposition}\label{lem:kamp:equilibrium}
  Let $\M = \tuple{\Vh,\Vt}$ be a \HTC\ trace and let $\mathcal{M} = \tuple{(\sigma_h,I_h),(\sigma_t,I_t)}$ be its corresponding \QHT\ interpretation.
  For every temporal formula $\varphi$,
  $\M$ is an equilibrium model of $\varphi$ iff $\mathcal{M}$ is an equilibrium model of $\ST{t}{\varphi}[t/0]$.
\end{proposition}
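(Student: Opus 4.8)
The plan is to lift the model correspondence of Proposition~\ref{lem:kamp} to equilibrium models by (i) reducing to the total case, (ii) matching condition~(1) of the two equilibrium notions via Proposition~\ref{lem:kamp}, and (iii) setting up a bijection between the ``counterexamples'' quantified over in condition~(2) of each notion. For (i), note that by Definition~\ref{def:teqm} and the definition of equilibrium model for \QHT, both sides can only hold when the interpretations are total, i.e.\ $\M=\tuple{\Vt,\Vt}$ and $\mathcal{M}=\tuple{(\sigma_t,I_t),(\sigma_t,I_t)}$. The correspondence of Section~\ref{sec:firstorder} preserves and reflects totality: since $\sigma_h(f_x(i))=v_{h,i}(x)$, $\sigma_t(f_x(i))=v_{t,i}(x)$, and $I_h,I_t$ are induced from $\Vh,\Vt$ through the (strict) solution relations, we have $\Vh=\Vt$ iff $S_h=S_t$. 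So I may assume $\M=\tuple{\Vt,\Vt}$; otherwise neither side holds. For (ii), Proposition~\ref{lem:kamp} gives $\tuple{\Vt,\Vt},0\models\varphi$ iff $\mathcal{M}\models\ST{t}{\varphi}[t/0]$, which is exactly condition~(1) on both sides.

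The core is (iii): that some \HTC\ trace $\tuple{\Vh,\Vt}$ with $\Vh\sqsubset\Vt$ satisfies $\varphi$ at $0$ iff some state $S_h$ with $S_h\pt S_t$ has $\tuple{S_h,S_t}\models\ST{t}{\varphi}[t/0]$. For the forward direction, the trace $\tuple{\Vh,\Vt}$ has a corresponding \QHT\ interpretation $\tuple{S_h,S_t}$ with the same total part $S_t$; from $\Vh\sqsubseteq\Vt$ and strictness one checks $I_h\subseteq I_t$ and $\sigma_h(s)\in\{\sigma_t(s),\undefined\}$ for every ground term $s$, so $S_h\peq S_t$, and since $\Vh\neq\Vt$ there are $i,x$ with $\sigma_h(f_x(i))=\undefined\neq\sigma_t(f_x(i))$, so $S_h\neq S_t$; Proposition~\ref{lem:kamp} then transfers satisfaction. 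For the converse, from $S_h\pt S_t$ I would define the trace $\Vh$ by $v_{h,i}(x)=\sigma_h(f_x(i))$ for $\rangeco{i}{0}{\lambda}$; $S_h\peq S_t$ yields $\Vh\sqsubseteq\Vt$, and then it remains to establish $\Vh\sqsubset\Vt$ together with $\tuple{\Vh,\Vt},0\models\varphi$, the latter reducing by Proposition~\ref{lem:kamp} to satisfaction of $\ST{t}{\varphi}[t/0]$ in the interpretation $\tuple{S_h^\circ,S_t}$ \emph{corresponding} to $\tuple{\Vh,\Vt}$.

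The main obstacle is exactly this last point, because $\tuple{S_h^\circ,S_t}$ need not coincide with the given $\tuple{S_h,S_t}$: the two share the values $\sigma_h(f_x(i))$, but $S_h^\circ$ carries the canonical atom set dictated by the correspondence while $I_h$ was arbitrary, and indeed $S_h$ might differ from $S_t$ only in its atom part, in which case the naive extraction would give $\Vh=\Vt$. I would close this with a coincidence lemma exploiting the syntactic shape of translated formulas: in $\ST{t}{\varphi}[t/0]$ every occurrence of a predicate $p_c$ is applied to terms of the form $f_{x_j}(\cdot)$ evaluated in the ``here'' trace, and the built-in symbols $=,<$ are interpreted through $\sigma$, so the truth of each atom of $\ST{t}{\varphi}[t/0]$ under $\tuple{S_h,S_t}$ is already determined by the values $\sigma_h(f_{x_j}(\cdot))$ --- equivalently by $\Vh$ --- and by the fixed total component $S_t$. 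Hence $\tuple{S_h,S_t}$ and $\tuple{S_h^\circ,S_t}$ satisfy the same translated formulas: this both yields $\tuple{S_h^\circ,S_t}\models\ST{t}{\varphi}[t/0]$ and, applied to an arbitrary witness $S_h\pt S_t$, lets us assume $S_h$ is already of ``trace form'', so that $S_h\neq S_t$ forces $\Vh\neq\Vt$. Stated differently, this is the observation that the correspondence restricts to a bijection between $\{\tuple{\Vh,\Vt}\mid\Vh\sqsubseteq\Vt\}$ and the relevant class of \QHT\ interpretations with $S_h\peq S_t$, under which the two minimization problems literally coincide. Combining (i)--(iii) gives the proposition, and since the argument never inspects which connectives occur in $\varphi$, it covers the past operators as well.
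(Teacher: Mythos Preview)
Your plan matches the paper's: assert an order-preserving correspondence between \HTC\ traces below $\Vt$ and \QHT\ states below $S_t$, then transfer minimality via Proposition~\ref{lem:kamp}. The paper's proof is literally two sentences to this effect---it simply states that $\Vh\sqsubseteq\Vt$ is in one-to-one correspondence with $(\sigma_h,I_h)\peq(\sigma_t,I_t)$ and invokes Proposition~\ref{lem:kamp}---and does not address the obstacle you single out in the converse of (iii). In that sense you are more careful than the paper, not less.

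There is, however, a genuine gap in your coincidence lemma. The \QHT\ clause for predicate atoms is $\mathcal{M}\models p_c(s_1,\dots,s_n)$ iff $p_c(\sigma_h(s_1),\dots,\sigma_h(s_n))\in I_h$, so the truth of a translated atom depends on $I_h$, not only on the values $\sigma_h(f_{x_j}(\cdot))$ and on $S_t$ as you claim. This breaks precisely the step ``$S_h\neq S_t$ forces $\Vh\neq\Vt$'': if $\mathcal{A}$ contains a constraint $c'$ not occurring in $\varphi$ whose instance is realised in $\Vt$, then $S_h=(\sigma_t,\,I_t\setminus\{p_{c'}(\dots)\})$ satisfies $S_h\pt S_t$ and still $\tuple{S_h,S_t}\models\ST{t}{\varphi}[t/0]$, yet the extracted trace gives $\Vh=\Vt$. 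So your reduction to ``trace-form'' $S_h$ does not go through without an additional hypothesis---for instance restricting $\mathcal{A}$ to the constraint atoms actually occurring in $\varphi$, or restricting the \QHT\ minimisation to states of trace form. The paper's one-line ``one-to-one correspondence'' hides the same issue; neither argument settles the converse direction as written.
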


An important consequence of our translation is that it opens the possibility of applying first-order theorem provers for solving inference problems for \THTC.
For instance, one salient \THTC\ inference problem is deciding \THTC-equivalence of two temporal theories with constraints, $\Gamma_1$ and $\Gamma_2$,
since it is a sufficient condition for their \emph{strong equivalence}~\citep{lipeva07a},
something that guarantees a safe replacement of $\Gamma_1$ by $\Gamma_2$ (or vice versa) in any arbitrary context.
The translation of the \QHT-formula obtained from $\ST{t}{\cdot}$ into classical First-Order Logic can be done in two steps:
first removing the partial functions in favor of predicates~\citep{cabalar11a}, and
second passing from quantified \HT\ (without partial functions) into first-order classical logic~\citep{pearce06a}.

 \section{Logic programs with Temporal Linear Constraints}\label{sec:linear}

In this section, we consider the fragment of \THTC\ interpreted over linear constraints whose solution table is strict.
We define a \textit{linear term} as an expression of the form
\begin{align}\label{def:linear:term}
  d_1 \cdot \next^{o_1} x_1 + \dots + d_n \cdot \next^{o_n} x_n
\end{align}
where $d_i \in \mathbb{Z}$, $x_i \in \mathcal{X}$, and $o_i\in \mathbb{Z}$ for $0\leq i\leq n$.
Multiplication and addition are denoted by `$\cdot$' and `$+$', respectively.
An implicit multiplicative factor of 1 is assumed for standalone numbers $d_i$.
Negative constants are represented using `$-$',
and the `$\cdot$' symbol may be omitted when contextually clear.
Given a linear term $\alpha$ as in~\eqref{def:linear:term},
we define
\(
\vars(\alpha)=\{\next^{o_i} x_i\mid 0\leq i\leq n\}
\).
For instance, $\vars(2 \cdot \next^{-2} x) = \{  \next^{-2} x\} = \{ \previous \previous x\}$.

A \textit{linear temporal constraint atom} is a temporal constraint atom of the form $\alpha\le\beta$,
where $\alpha$ and $\beta$ are linear terms.
We use the abbreviation
$\alpha = \beta$,
$\alpha < \beta$, and
$\alpha \neq \beta$
for
$(\alpha \le \beta) \wedge (\beta \le \alpha)$,
$(\alpha \le \beta) \wedge \neg (\beta \le \alpha)$, and
$(\alpha < \beta) \vee (\beta < \alpha)$, respectively.
Similarly, $\vars(\alpha \otimes \beta)= \vars(\alpha) \cup \vars(\beta)$, for $\otimes \in \lbrace \le, < , =, \neq\rbrace$.
Given our focus on strict solution tables, we have
\(
\tbl{\alpha\le\beta} \subseteq \mathcal{D}^n
\)
for all $\alpha\le\beta$,
where $n$ is the number of terms in $\alpha$ and $\beta$.
More generally, given a constraint atom $c$ with a strict relation,
we define its \emph{complement} constraint 
as $\tbl{\overline{c}} = \mathcal{D}^n \setminus\tbl{c}$ (which is also strict).
The interaction of complementary constraints with logical negation is given in the next proposition.
\begin{proposition}\label{prop:complement}
  For any \HTC\ trace $\M=\tuple{\Vh,\Vt}$ of length $\lambda$ and
  any pair $c$, $\overline{c}$ of complementary constraints, with both $\tbl{c}$ and $\tbl{\overline{c}}$ being strict,
  we have that
  $\M, i \models c$ implies $\M, i \models \neg \overline{c}$
  for all $\rangeco{i}{0}{\lambda}$.
\end{proposition}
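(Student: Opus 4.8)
The plan is to push the negation on the right-hand side through the Negation part of Proposition~\ref{prop:persistence}, and then reduce the claim to an elementary set-theoretic fact relating $\tbl{c}$ and $\tbl{\overline c}$.

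First, by the Negation part of Proposition~\ref{prop:persistence}, $\M,i\models\neg\overline c$ holds iff $\tuple{\Vt,\Vt},i\not\models\overline c$. So it suffices to show that, for every $\rangeco{i}{0}{\lambda}$, $\M,i\models c$ implies $\tuple{\Vt,\Vt},i\not\models\overline c$.

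Second, I would read off the single tuple of values that governs both atoms. Writing $c = c(\next^{o_1}x_1,\dots,\next^{o_n}x_n)$, with $\overline c$ the atom over the same temporal terms, put $\bar a \eqdef (v_{t,i}(\next^{o_1}x_1),\dots,v_{t,i}(\next^{o_n}x_n))$. From $\M,i\models c$ and the atom clause (Condition~\ref{sat:atoms}, instantiated at $w=t$) we obtain $\bar a\in\tbl{c}$. Moreover, since both components of the \HTC\ trace $\tuple{\Vt,\Vt}$ are the trace $\Vt$, the same atom clause gives: $\tuple{\Vt,\Vt},i\models\overline c$ iff $\bar a\in\tbl{\overline c}$.

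Finally, I would invoke the definition of the complement, $\tbl{\overline c}=\mathcal{D}^n\setminus\tbl{c}$: since $\bar a\in\tbl{c}$, necessarily $\bar a\notin\tbl{\overline c}$, hence $\tuple{\Vt,\Vt},i\not\models\overline c$, which by Proposition~\ref{prop:persistence} yields $\M,i\models\neg\overline c$. I do not expect a genuine obstacle here; the only point to watch is the bookkeeping of consistently reading the values $\bar a$ off the $\Vt$ component, both when applying the negation characterisation and when unfolding the atom clause. The strictness of $\tbl{c}$ and $\tbl{\overline c}$ is what makes $\mathcal{D}^n\setminus\tbl{c}$ a legitimate (and strict) solution relation in the first place, but it plays no further role in this direction; alternatively one could expand $\neg\overline c$ as $\overline c\to\bot$ and argue directly via the implication clause for both $w\in\{h,t\}$, using Condition~\ref{sat:atoms}$'$ in the ``here'' case.
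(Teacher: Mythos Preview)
Your proof is correct and follows essentially the same route as the paper's: both extract the relevant value tuple from the satisfaction of $c$, use $\tbl{\overline c}=\mathcal{D}^n\setminus\tbl{c}$ to conclude non-membership, and then appeal to the Negation characterisation (Proposition~\ref{cor:negation}) to obtain $\M,i\models\neg\overline c$. Your version is in fact slightly more economical, since you only track the $\Vt$-tuple needed for the negation step, whereas the paper's proof also records the $\Vh$-tuple before invoking the corollary.
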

We thus obtain that $\alpha = \beta$ implies $\neg (\alpha \neq \beta)$ but not necessarily vice versa.

For illustration, consider the atoms $\next x = x$ and $\next x \not= x$, where $x$ takes its values in $\mathbb{N}$.
We define the solution relation for each constraint as
\begin{align*}
  \tbl{\next x = x}      & =  \{ (a_1,a_2) \in \mathbb{N}^2 \mid a_1 = a_2 \not = \undefined \}\\
  \tbl{\next x \not = x} & =  \{ (a_1,a_2) \in \mathbb{N}^2 \mid a_1 \not= a_2 \text{ and } a_1 \not = \undefined \text{ and } a_2 \not= \undefined\}.
\end{align*}
We have that $\tbl{\next x \not = x} = \mathbb{N}^2\setminus \tbl{\next x = x}$, and therefore $\tbl{\next x \not = x} = \tbl{\overline{\next x  = x}}$.
By Proposition~\ref{prop:complement}, $\next x = x$ implies $\neg (\next x \not= x)$.
Conversely, let $\M=\tuple{\Vh,\Vt}$ be an \HTC\ trace of length $2$ defined as $v_{w,i}(x) = \undefined$ for $w \in \{ h,t\}$ and $i\in \{ 0,1\}$.
Then, we have that $\M, 0 \models \neg ( \next x \not = x )$ but $\M, 0 \not \models \next x = x$.

When \HTC\ is extended with linear constraints,
\cite{cakaossc16a} introduce an interesting feature known as \emph{assignments}.
This assignment operator is a specialized form of equality
that allows us to check if a variable has a defined value.
To illustrate this, consider the two following formulas:
\footnote{Recall that atom $p=\mathtt{t}$ is the constraint representation of variable $p$ (see Section~\ref{sec:tht2thtc}).}
\begin{align}\label{eq:formulas:undefined}
  (x = y)
  \text{ and }
  (p=\mathtt{t} \to y = 10)
\end{align}
The equality $x = y$ implies that any assignment to $x$ must also be an assignment to $y$, and vice versa.
Given that $p=\mathtt{t}$ is not derivable, $y$ appears to be undefined in this context.
However, due to the strict nature of $\tbl{x = y}$,
this allows for both $x$ and $y$ to take on any value, even if $y$ seems undefined.
To ensure that $x$ is only assigned a value when $y$ is defined, we can replace $x = y$ in~\eqref{eq:formulas:undefined} with:
\begin{align}\label{eq:formulas:defined}
  y \leq y & \to x = y
\end{align}
Because $y\leq y$ cannot be established (due to the underivability of $p=\mathtt{t}$ making $y$ undefined),
the resulting \HTC\ model has $x$, $y$ and $p$ all undefined.
Indeed, within \HTC, the modified formula in~\eqref{eq:formulas:defined} effectively represents the assignment $x := y$.

The remainder of this section focuses on extending this assignment mechanism of  \HTC\ to our temporal setting in \THTC.
For any linear expression $\alpha$, we define
\begin{align*}
  \df(\alpha) & \eqdef\textstyle \bigwedge_{\next^l x \in \vars(\alpha)} \next^l x\le \next^l x.
\end{align*}
The satisfiability of $\df(\alpha)$ necessitates that all its variables are defined, as shown next.
\begin{proposition}\label{prop:dfs}
  For all \HTC\ traces $\M=\tuple{(v_{h,i})_{\rangeco{i}{0}{\lambda}},(v_{t,i})_{\rangeco{i}{0}{\lambda}}}$ and
  for all linear expressions $\alpha$,
  the following statements are equivalent for all $\rangeo{i}{0}{\lambda}$
  \begin{enumerate}
  \item\label{prop:dfs:1} $\M, i \models \df(\alpha)$
  \item\label{prop:dfs:2} $v_{h,i}(\next^l x) = v_{t,i}(\next^l x) \not = \undefined$ for all $\next^l x \in \vars(\alpha)$
  \end{enumerate}
\end{proposition}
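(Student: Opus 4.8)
The statement is an equivalence between a syntactic fact ($\M,i\models\df(\alpha)$) and a semantic one (all temporal terms in $\alpha$ are defined and agree in the $h$- and $t$-components at time $i$). The natural approach is to unfold the definition of $\df(\alpha)$ as the conjunction $\bigwedge_{\next^l x\in\vars(\alpha)}(\next^l x\le\next^l x)$ and reduce the claim to a single-conjunct statement: for each $\next^l x\in\vars(\alpha)$, we have $\M,i\models(\next^l x\le\next^l x)$ iff $v_{h,i}(\next^l x)=v_{t,i}(\next^l x)\neq\undefined$. Once this per-term equivalence is established, the full equivalence follows because $\M,i\models$ a conjunction iff it satisfies every conjunct (satisfaction clause for $\wedge$), matching the universal quantifier over $\vars(\alpha)$ on the semantic side.

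For the per-term equivalence, I would argue directly from Condition~\ref{sat:atoms}. The atom $\next^l x\le\next^l x$ is a linear temporal constraint atom with $n=1$ term; its solution relation, being strict (we are in the strict-table fragment of Section~\ref{sec:linear}), is $\tbl{\next^l x\le\next^l x}=\{a\in\mathcal{D}\mid a\le a\}=\mathcal{D}$, which notably excludes $\undefined$. By Condition~\ref{sat:atoms}, $\M,i\models(\next^l x\le\next^l x)$ iff for all $w\in\{h,t\}$ we have $v_{w,i}(\next^l x)\in\tbl{\next^l x\le\next^l x}=\mathcal{D}$, i.e.\ iff $v_{h,i}(\next^l x)\neq\undefined$ and $v_{t,i}(\next^l x)\neq\undefined$. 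It remains to observe that whenever both are defined they must be equal: since $\Vh\sqsubseteq\Vt$, we have $v_{h,i}\sqsubseteq v_{t,i}$, and recalling how term values are computed ($v_{w,i}(\next^l x)=v_{w,i+l}(x)$ when $i+l$ is in range), $v_{h,i+l}(x)=d\in\mathcal{D}$ forces $v_{t,i+l}(x)=d$ by the definition of $\sqsubseteq$ on valuations. Hence $v_{h,i}(\next^l x)\neq\undefined$ already implies $v_{h,i}(\next^l x)=v_{t,i}(\next^l x)$, and the converse direction is immediate. This yields exactly statement~\ref{prop:dfs:2}.

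The only subtlety — and the step I would be most careful about — is the interaction with out-of-range offsets: if $i+l\notin\intervco{0}{\lambda}$, then $v_{h,i}(\next^l x)=v_{t,i}(\next^l x)=\undefined$, so the atom fails and statement~\ref{prop:dfs:2} also fails, keeping the equivalence intact; but one must make sure the solution-relation convention does not accidentally admit $\undefined$ here. Since we explicitly restricted to strict tables, $\undefined\notin\mathcal{D}=\tbl{\next^l x\le\next^l x}$, so this case is handled uniformly. I would therefore structure the proof as: (i) reduce to a single term via the $\wedge$-clause; (ii) compute the strict solution relation of $\next^l x\le\next^l x$; (iii) apply Condition~\ref{sat:atoms} and use $v_{h,i}\sqsubseteq v_{t,i}$ to collapse ``both defined'' to ``defined and equal''. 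No induction is needed; the argument is a direct unfolding, with the persistence/monotonicity property $\Vh\sqsubseteq\Vt$ doing the only real work.
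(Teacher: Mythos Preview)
Your proposal is correct and follows essentially the same approach as the paper: compute the strict solution relation $\tbl{\next^l x \le \next^l x}=\mathcal{D}$, apply the atom satisfaction condition to obtain definedness of each term, and invoke $\Vh\sqsubseteq\Vt$ to collapse ``both defined'' to ``defined and equal,'' then lift to the full conjunction. Your treatment is in fact slightly more careful than the paper's, making the out-of-range case and the use of $\sqsubseteq$ explicit.
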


In our context,
an \emph{assignment} for a variable $x\in \mathcal{X}$ is an expression of the form $\next^l x := \alpha $,
where $\alpha$ is a linear temporal term.
The assignment operator can be seen as a derived operator defined as
\begin{align*}
  \next^l x & := \alpha \eqdef  (\df(\alpha) \rightarrow \next^l  x = \alpha).
\end{align*}

With it,
we now define a syntactic subclass that takes the form of a logic program.
To ease notation, we let
\(
(\varphi \leftarrow \psi) \eqdef (\psi \to \varphi)
\)
and
\(
(\varphi,\psi) \eqdef (\psi \wedge \varphi)
\).
Then,
a \textit{temporal linear constraint rule} (or just \emph{rule}) is of the form:\begin{align} \label{tlc:rule}
  \next^l x & := \alpha \leftarrow \ell_1, \dots, \ell_m, \neg \ell_{m+1}, \dots, \neg \ell_{k}
\end{align}
where $\next^l x := \alpha$ is an assignment and each $\ell_i$ is a linear temporal constraint atom for $0 \le m \le k$.
Placing Rule~\eqref{tlc:rule} under the $\alwaysF$ operator ensures its satisfaction across all temporal states.
As with \HTC,
assignments appearing in rule heads can be transformed into an analogous logic programming representation.
\begin{proposition}\label{translation}
  A rule of form~\eqref{tlc:rule} is equivalent to the rule
  \begin{eqnarray*}
    \next^l  x = \alpha  &\leftarrow &\next^{o_1} x_1 \le \next^{o_1} x_1, \dots , \next^{o_n}x_n\le \next^{o_n}x_n ,
                          \ell_1, \dots, \ell_m, \neg \ell_{m+1}, \dots, \neg \ell_{k}.
  \end{eqnarray*}
  where $\next^{o_i}x_i\in \vars(\alpha)$ for $0 \le i \le n = \left|\vars(\alpha)\right|$.
\end{proposition}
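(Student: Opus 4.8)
The plan is to unfold the definition of the assignment operator in the head of Rule~\eqref{tlc:rule} and show, by elementary equivalences in \THTC, that the resulting formula coincides with the rule in the statement. Writing $A \eqdef \{\next^{o_1}x_1,\dots,\next^{o_n}x_n\} = \vars(\alpha)$ and $B$ for the body $\ell_1,\dots,\ell_m,\neg\ell_{m+1},\dots,\neg\ell_k$, Rule~\eqref{tlc:rule} is by definition $(B \to (\df(\alpha) \to \next^l x = \alpha))$, whereas the target rule is $(B \wedge \df(\alpha)) \to \next^l x = \alpha$, since $\bigwedge_{\next^{o_i}x_i\in A}(\next^{o_i}x_i \le \next^{o_i}x_i)$ is exactly $\df(\alpha)$. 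So the whole claim reduces to the \THTC-validity of the currying/uncurrying equivalence
\[
  (B \to (\df(\alpha) \to \chi)) \;\equiv\; ((B \wedge \df(\alpha)) \to \chi),
\]
with $\chi = (\next^l x = \alpha)$.

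First I would record that this equivalence is valid in \THTC\ for \emph{any} formulas. The cleanest way is to invoke the translation of Section~\ref{sec:firstorder}: by Proposition~\ref{lem:kamp}, \THTC-equivalence of two temporal formulas follows from \QHT-equivalence of their $\ST{t}{\cdot}$-translations, and since $\ST{t}{\cdot}$ commutes with $\wedge$ and $\to$, it suffices to know that $(b \to (d \to c)) \equiv ((b \wedge d) \to c)$ holds in \QHT\ (indeed in propositional \HT). This is a standard \HT-tautology: it holds in every intermediate logic, as both sides have the same Heyting-algebra interpretation, $b \Rightarrow (d \Rightarrow c) = (b \wedge d) \Rightarrow c$. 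Alternatively, and to keep the argument self-contained, I would verify it directly from the \THTC\ satisfaction clauses for $\wedge$ and $\to$ at a fixed time point $i$: unfolding $\M,i\models(B\to(\df(\alpha)\to\chi))$ gives ``for all $w\in\{h,t\}$, $\tuple{\Vw,\Vt},i\not\models B$ or [for all $w'\in\{h,t\}$, $\tuple{\boldsymbol{v}_{w'},\Vt},i\not\models\df(\alpha)$ or $\tuple{\boldsymbol{v}_{w'},\Vt},i\models\chi$]''; using persistence (Proposition~\ref{prop:persistence}) to collapse the nested ``for all $w$'' quantifiers — an inner failure at the $h$-component is implied by failure at whichever component is relevant — this rearranges to ``for all $w\in\{h,t\}$, $\tuple{\Vw,\Vt},i\not\models B$ or $\tuple{\Vw,\Vt},i\not\models\df(\alpha)$ or $\tuple{\Vw,\Vt},i\models\chi$'', which is exactly $\M,i\models((B\wedge\df(\alpha))\to\chi)$.

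Then I would spell out the remaining bookkeeping: the target rule's extra body atoms $\next^{o_i}x_i \le \next^{o_i}x_i$, one for each $\next^{o_i}x_i \in \vars(\alpha)$, conjoined together are literally $\df(\alpha)$, so $(B \wedge \df(\alpha)) \to \chi$ and the displayed rule of the proposition are the same formula up to associativity and commutativity of $\wedge$ in the body — which are \THTC-valid by the same \HT-tautology argument, or simply by the semantics of $\wedge$. Finally, since the statement concerns rules, which are understood under the $\alwaysF$ operator, I would note that equivalence at every time point lifts to equivalence under $\alwaysF$: if $\M,i\models\psi_1 \Leftrightarrow \M,i\models\psi_2$ for all $i$, then $\M,i\models\alwaysF\psi_1 \Leftrightarrow \M,i\models\alwaysF\psi_2$ by clause~5 of Proposition~\ref{prop:derived:op}.

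The main obstacle is the double layer of ``for all $w\in\{h,t\}$'' quantification coming from the two nested implications in the \THTC\ clause for $\to$; one must check that flattening them introduces no spurious cases, and this is precisely where persistence (Proposition~\ref{prop:persistence}(1)) does the work — it guarantees that if the antecedent holds at the $\langle\Vh,\Vt\rangle$ world it also holds at $\langle\Vt,\Vt\rangle$, so the nested condition at the ``$t$''-world is subsumed. Everything else is a routine unfolding of definitions. Invoking the \QHT-translation route sidesteps even this, since there the currying law is a textbook \HT-equivalence; I would present that as the primary argument and mention the direct verification as a remark.
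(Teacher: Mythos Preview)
Your proposal is correct and follows essentially the same route as the paper's proof: unfold the definition of the assignment $\next^l x := \alpha$ as $\df(\alpha)\to\next^l x=\alpha$, apply the currying equivalence $(\varphi\to(\psi\to\chi))\equiv((\varphi\wedge\psi)\to\chi)$, and then expand $\df(\alpha)$ into the conjunction of atoms $\next^{o_i}x_i\le\next^{o_i}x_i$. The only difference is that the paper simply invokes the currying law as a propositional-logic fact, whereas you additionally justify it in \THTC\ via the \QHT\ translation and via a direct semantic check using persistence; this extra care is sound but not required for the paper's level of detail.
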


The logic programming fragment of \THTC\ offers a powerful approach to modeling various aspects of dynamic systems,
including the representation of inertia rules and default values.
To demonstrate the expressive power of our formalism, let us reconsider our initial scenario:
``{\em
  A radar is positioned at the 400 km mark on a road with a speed limit of 90 km/h.
  A car is initially traveling at 80 km/h.
  At time instant 4, the car accelerates by 11.35 km/h.
  Subsequently, at time instant 6, it decelerates by 2.301 km/h.
  The problem is to determine whether the car will exceed the speed limit and thus incur a fine.}''

To formalize this scenario within \THTC, we introduce the following variables.
The numerical fluents $p$ and $s$ (ranging over $\mathbb{N}$) represent the car's position and speed, respectively.
The numerical variables $\mathit{rdpos}$ and $\mathit{rdlimit}$ (ranging over $\mathbb{N}$) refer to
the radar's position and speed limit.
The numerical action $\mathit{acc}$ (ranging over $\mathbb{Z}$) models the car's acceleration (positive) or deceleration (negative).
Lastly, the Boolean variable $\mathit{fine}$ indicates whether the car's speed exceeded $\mathit{rdlimit}$ at $\mathit{rdpos}$,
leading to a fine.

The subsequent set of rules formalizes our scenario in \THTC.\footnote{To facilitate understanding,
  the values of $s$, $\mathit{acc}$, and $\mathit{pos}$ are presented using decimal notation.
  In a real-world implementation, speed and acceleration should be expressed in $m/h$,
  while position should be expressed in meters.}
\begin{align}
	p & :=  0                                \label{rule:1}\\
	s & :=  80                               \label{rule:2}\\
	\Box (\mathit{rdlimit}& :=  90)          \label{rule:3}\\
	\Box (\mathit{rdpos} & := 400)           \label{rule:4}\\
	\Box (\next s & := s + acc)              \label{rule:s} \\
	\Box (\next s & := s  \leftarrow \neg (\next s \neq s))    \label{rule:inertia:s}\\
	\Box (\next p & := p + s )                \label{rule:pos}\\
	\Box (\next \mathit{fine} & \leftarrow  p < \mathit{rdpos}, \next p \ge \mathit{rdpos},\next s > \mathit{rdlimit}) \label{rule:fine}
\end{align}
Rules preceded by the always operator $\alwaysF$ in our \THTC\ formalization are enforced throughout the entire temporal evolution,
contrasting with Rules~\eqref{rule:1} and~\eqref{rule:2},
which are specific to the initial state.
Specifically, Rules~\eqref{rule:1} and~\eqref{rule:2} set the car's initial position to $0$ meters and its speed to $80~km/h$.
The radar's characteristics, its position at $400~km$ and a constant speed limit of $90~km/h$,
are defined by Rules~\eqref{rule:3} and~\eqref{rule:4}.
The dynamics of the car's speed are captured by Rules~\eqref{rule:s} and~\eqref{rule:inertia:s}:
the speed remains unchanged unless an acceleration value ($\mathit{acc}$) is present,
causing a corresponding adjustment.
Rule~\eqref{rule:pos} dictates that the car's position changes based on its speed.
The condition for receiving a fine is formalized in Rule~\eqref{rule:fine}:
if the car's speed ($s$) exceeds the radar's speed limit ($\mathit{rdlimit}$)
at or after passing the radar's position ($\mathit{rdpos}$) at any time,
a fine is issued.
Finally, to model the acceleration of $11.350~km/h$ at time $4$ and the deceleration of $2.301~km/h$ at time $6$,
we include the following temporal assignments:
\begin{align*}
  \next^{4} \mathit{acc} & :=  11.35 & \next^{6} \mathit{acc} &:= -2.301
\end{align*}
These two rules for acceleration and deceleration are applied specifically at time points $4$ and $6$, respectively.
Note that our formalization does not provide a value for $\mathit{acc}$ in the initial state.
Since no value is provided,
$\mathit{acc}$ is left undefined at the initial state,
Rule~\eqref{rule:s} is inapplicable
and the inertia rule in~\eqref{rule:inertia:s} keeps the speed ($\mathit{s}$) constant.
However, at time points $4$ and $6$,
a value for $\mathit{acc}$ is given,
so Rule~\eqref{rule:s} applies and the value of $\mathit{s}$ is modified.

An alternative formalization could rely on the use of the default rule
\begin{align*}
  \alwaysF \left(acc:=0 \leftarrow \neg \left(acc\not=0\right)\right)
\end{align*}
to set the value of $\mathit{acc}$ to $0$ in the absence of information.
Under the addition of such a rule,
inertia~\eqref{rule:inertia:s} could be removed, since it becomes redundant:
when there is no acceleration, $acc=0$ and we already derive $\next s=s$ through $\eqref{rule:s}$.

The following table presents an equilibrium model that satisfies the representation of our scenario.
The values for $\mathit{rdlimit}$ and $\mathit{rdpos}$ are omitted from the table as they remain constant over time.
The evolution of the remaining fluents is as follows:
\begin{center}
	\begin{tabular}{cccc}
		\textit{time}   &        $\mathit{s}$ $(km/h)$ &         $\mathit{p}$ $(km)$&           $\mathit{acc}$ $(km/h)$\\
		$0$  &            $80$  &                 $0$     &            $\undefined$  \\
		$1$  &            $80$  &                 $80$    &            $\undefined$  \\
		$2$  &            $80$  &                 $160$   &            $\undefined$  \\
		$3$  &            $80$  &                 $240$   &            $\undefined$  \\
		\underline{$4$} & \underline{$80$} &       \underline{$320$} &    \underline{$11.35$}      \\
		\underline{$5$} & \underline{$91.35$} &    \underline{$400$} & \underline{$\undefined$} \\
		$6$  &            $91.35$  &             $491.35$  &             $-2.301$  \\
		$7$  &            $89.049$  &            $582.7$  &            $\undefined$  \\
		$8$  &            $89.049$  &            $671.749$  &            $\undefined$
	\end{tabular}
\end{center}
Between time points $4$ and $5$,
the car's speed exceeds the limit as it passes the radar's position,
resulting in the $\mathit{fine}$ atom becoming true at time point $5$.

 \section{Discussion}\label{sec:discussion}

We have integrated two established extensions of \HT, namely \THT\ and \HTC\ (along with their equilibrium variants),
into a unified framework for temporal (nonmonotonic) reasoning with constraints.
Our work is inspired by extensions of (monotonic) Linear Temporal Logic (\LTL)
interpreted over constraint systems and first-order theories,
in particular the approaches taken by~\cite{demri06a} and \cite{gegigi22a},
which apply next and previous operators to non-Boolean variables.
In many of these cases, the constraints considered are based on Presburger arithmetic~\citep{demri06b} or
qualitative spatial formalisms~\citep{vilkau86a,wolzak00a,balcon01a}.
Within the ASP paradigm,
a significant early extension of temporal answer sets with constraints was introduced by \cite{gimaspd13a},
building upon their earlier work~\citep{gimath13a}.
This approach utilizes logic programming syntax and semantics,
which differs from our \HT-based approach.
Furthermore, their constraints typically refer to variables within individual states,
unlike our approach which allows for the combination of variables across multiple temporal states.

In addition to defining and formally elaborating on the semantics of \THTC,
we extended Kamp's translation~\citep{kamp68a} to our approach by mapping \THTC\ formulas into \QHT,
and notably demonstrated that this correspondence extends to the respective equilibrium models.
We further investigated the logic programming fragment of \THTC,
illustrating its expressive power through modeling our initial scenario involving temporal numeric constraints.
This work lays the groundwork for the future integration of constraint reasoning into other \HT-based temporal
extensions, such as dynamic and metric logics of \HT\ (and their equilibrium variants)~\citep{bocadisc18a,becadiscsc24a}.
Advancing this line of research will necessitate adapting existing computational methods~\citep{becadiharosc24a}
to operate over constraints, extending beyond solely propositional atoms.

As a first step, we have only considered linear constraints.
Future work includes studying the integration of periodicity constraints~\citep{demri06b} and
qualitative spatial constraints~\citep{wolzak00a,balcon01a}.
Periodicity constraints would allow for expressing congruence relations among different variables,
proving particularly useful for capturing cyclical behaviors or recurring events.
Integrating spatial constraints would enable us to assign spatial meaning to variables and describe their dynamics,
for instance, modeling objects that change their position or size over time.

In the classical setting,
the computational complexity of temporal logics interpreted over constraint systems is often highly undecidable~\citep{demri06a},
with decidability strongly depending on the specific constraint system.
For example,
the satisfiability problem for \LTL\ interpreted over constraint systems based on Presburger arithmetic (including periodicity constraints) is in PSPACE under certain completion properties~\citep{demri06a,balcon02a}.
When considering qualitative spatial constraints expressed using the Region Connection Calculus,
the problem is decidable,
with complexity ranging from NP to EXPSPACE, depending on the imposed restrictions~\citep{wolzak00a}.
Following a strategy similar to~\citep{cabdem11a},
it may be possible to establish a bijection between classical and equilibrium models,
potentially allowing us to establish a lower bound on the complexity of the satisfiability problem in \THTC.
However, determining a corresponding upper bound remains an open challenge.

 \section*{Acknowledgments}
This work was supported by the DFG grant SCHA 550/15 (Germany), the Etoiles Montantes CTASP project (Region Pays de la Loire, France), and the MICIU/AEI/10.13039/501100011033 grant PID2023-148531NB-I00 (Spain).

\newpage
\appendix
\section{Proofs: For reviewing purposes}\label{sec:proofs}

\begin{proposition} the relation $\sqsubseteq$ is a partial order relation.
\end{proposition}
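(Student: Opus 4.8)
The plan is to verify, directly from the definition, the three characteristic properties of a partial order — reflexivity, transitivity, and antisymmetry — for the relation $\sqsubseteq$ on partial valuations, where $v\sqsubseteq v'$ means that $v(x)=d$ implies $v'(x)=d$ for all $x\in\mathcal{X}$ and $d\in\mathcal{D}$.

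\textbf{Reflexivity} is immediate: for any partial valuation $v$, the implication ``$v(x)=d$ implies $v(x)=d$'' holds trivially, so $v\sqsubseteq v$. \textbf{Transitivity} is a one-line definition chase: assuming $v\sqsubseteq v'$ and $v'\sqsubseteq v''$, if $v(x)=d$ with $d\in\mathcal{D}$ then $v'(x)=d$ by the first hypothesis, and then $v''(x)=d$ by the second, so $v\sqsubseteq v''$.

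The only step needing a little care is \textbf{antisymmetry}, because the text has introduced ``$v=v'$'' as shorthand for ``$v\sqsubseteq v'$ and $v'\sqsubseteq v$''; to know this coincides with genuine (pointwise) equality of the valuations — so that $\sqsubseteq$ is a true partial order and not merely a preorder — I would argue: if $v\sqsubseteq v'$ and $v'\sqsubseteq v$, then $v(x)=v'(x)$ for every $x\in\mathcal{X}$. Fix $x$ and split on $v(x)$. If $v(x)=d\in\mathcal{D}$, then $v'(x)=d$ by $v\sqsubseteq v'$. If $v(x)=\undefined$, then $v'(x)$ cannot equal any $d\in\mathcal{D}$, since by $v'\sqsubseteq v$ that would force $v(x)=d$, contradicting $v(x)=\undefined$; hence $v'(x)=\undefined=v(x)$. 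In either case $v(x)=v'(x)$, as required.

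Finally I would note that the same three arguments apply verbatim, componentwise in $i$, to the lifted relation on traces of a fixed length $\lambda$ (where $\Vh\sqsubseteq\Vt$ iff $v_{h,i}\sqsubseteq v_{t,i}$ for all $\rangeco{i}{0}{\lambda}$), so $\sqsubseteq$ is a partial order there as well. I do not anticipate any real obstacle; the only subtlety is the bookkeeping around the overloaded symbol ``$=$'' in the antisymmetry argument.
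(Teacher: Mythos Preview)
Your proposal is correct and follows essentially the same approach as the paper: verifying reflexivity, transitivity, and antisymmetry directly from the definition of $\sqsubseteq$. The only stylistic difference is that the paper phrases each of the three checks as a proof by contradiction, whereas you argue them directly (and handle the undefined case in antisymmetry more explicitly); your added remark on the lifted relation for traces is a harmless bonus not present in the paper's proof.
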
	
\begin{proof}  We prove that the relation $\sqsubseteq$ is reflexive, antisymmetric and transitive:
	\begin{itemize}[itemsep=0pt]
		\item \textit{Reflexivity}: Assume by contradiction that $v \not \sqsubseteq v$, therefore there exists $ (x,d) \in \mathcal{X} \times \mathcal{D}$ s.t. $v(x)=d$ and $v(x) \not = d$, which is a contradiction. 
		\item \textit{Transitivity}: Assume by contradiction that $v_1 \sqsubseteq v_2$ and $v_2 \sqsubseteq v_3$ but $  v_1 \not \sqsubseteq v_3$. Then there exists $ (x,d) \in \mathcal{X} \times \mathcal{D}$ s.t. $v_1(x) =\mathcal{D}$ and $v_3(x) \not =d$. Since $v_1 \sqsubseteq v_2$, then $v_2(x) =d$, and since $v_2  \sqsubseteq v_3$, then $v_3(x) =d$, which is a contradiction. 
		\item \textit{Antisymmetry}: Assume by contradiction that $v \sqsubseteq v'$ and $v' \sqsubseteq v$  but $v \not = v'$. It follows, without loss of generality, that there exists $ (x,d) \in \mathcal{X} \times \mathcal{D}$ s.t. $v(x)=d$ and $v'(x) \not = d$. Since $v \sqsubseteq v'$ then $v'(x) = d$, which is a contradiction.
	\end{itemize}	
\end{proof}	

\begin{proofof}{Proposition~\ref{prop:persistence}} By structural induction. Let us take $\M=\tuple{\Vh,\Vt}$.
	For the base case (a constraint atom $c(\next^{o_1}x_1,\dots,\next^{o_n}x_n)$) the proof goes as follows:

$\M, i \models c(\next^{o_1}x_1,\dots,\next^{o_n}x_n) $ iff $ (v_{h,i}(\next^{o_1} x_1),\dots,v_{h,i}(\next^{o_n} x_n)) \in \tbl{c(\next^{o_1}x_1,\dots,\next^{o_n}x_n)}$ and $ (v_{t,i}(\next^{o_1} x_1),\dots,v_{t,i}(\next^{o_n} x_n)) \in \tbl{c(\next^{o_1}x_1,\dots,\next^{o_n}x_n)}$. Therefore,  $\tuple{\Vt,\Vt},i \models c(\next^{o_1}x_1,\dots,\next^{o_n}x_n) $.
We present below the proof for the following connectives:

\begin{itemize}[itemsep=0pt]
	\item Case $\varphi \land \psi$: $\M,i \models \varphi \land \psi$ iff $\M,i \models \varphi $ and $\M,i \models  \psi$. By induction, $\tuple{\Vt,\Vt}, i \models \varphi$ and $\tuple{\Vt,\Vt}, i \models \psi$ iff $\tuple{\Vt,\Vt}, i \models \varphi \land \psi$.
	\item The case of the disjunction is similar to the case of the conjunction.
	\item Case $\varphi \to \psi$: it follows the semantics of implication.
	\item Case $\next \varphi$: if $\M, i \models \next \varphi$ then $i < \lambda -1$ and $M, i+1 \models \varphi$. By induction, $\tuple{\Vt,\Vt}, i+1 \models \varphi$ and therefore $\tuple{\Vt,\Vt}, i \models \next \varphi$.
	\item $\varphi \until \psi$: if $\M, i \models \varphi \until \psi$ then there exists $k \ge i$ s.t. $\M, k \models \psi$ and for all $i \le j < k$, $\M, j \models \varphi$. By induction, there exists $k \ge i$ s.t. $\tuple{\Vt,\Vt}, k \models \psi$ and for all $i \le j < k$, $\tuple{\Vt,\Vt}, j \models \varphi$. Therefore, $\tuple{\Vt,\Vt}, i \models \varphi \until \psi$.
	\item $\varphi \release \psi$: if $\M, i \models \varphi \release \psi$ then for all $k \ge i$ either $\M, k \models \varphi$ or $\M, j \models \varphi$ for some $i \le j < k$. By induction, for all $k \ge i$ either $\tuple{\Vt,\Vt}, k \models \varphi$ or $\M, j \models \varphi$ for some $i \le j < k$. Therefore, $\tuple{\Vt,\Vt}, i \models \varphi \release \psi$.
	\item $\previous \varphi$: if $\M, i \models \previous \varphi$ then $i > 0$ and $M, i-1 \models \varphi$.
   By induction, $\tuple{\Vt,\Vt}, i-1 \models \varphi$ and therefore $\tuple{\Vt,\Vt}, i \models \previous \varphi$.
   \item The proof for the operators $\since$ and $\trigger$ is done in a similar way.
\end{itemize}
\end{proofof}

\begin{proofof}{Corollary~\ref{cor:negation}}
We take $\M=\tuple{\Vh,\Vt}$. From left to right, if $\M, i \models \neg \varphi$ then, by Proposition~\ref{prop:persistence}, $\tuple{\Vt,\Vt}, i \models \neg \varphi$. Therefore, $\tuple{\Vt,\Vt}, i \not \models  \varphi$.
From right to left, if $\tuple{\Vt,\Vt},i \not \models \varphi$, then by Proposition \ref{prop:persistence}, $\tuple{\Vh,\Vt},i \not \models \varphi$. From the satisfaction relation it follows that $\M,i  \models \neg \varphi$.
\end{proofof}

\begin{proofof}{Proposition~\ref{prop:htc2thtc}} Proof sketch. The proof is done by structural induction.
	The case of a constraint atom is proved by using the correspondence between the denotation and the solution table.
	For the rest of the connectives we use both \HTC{} and \THTC{} satisfaction relation and the induction hypothesis.
\end{proofof}

\begin{proposition}\label{prop:tht2thtc:bijection} $\delta$ is a bijective function.
\end{proposition}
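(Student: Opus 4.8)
The plan is to exhibit an explicit two‑sided inverse for $\delta$. Fix a length $\lambda$ and write $\Delta_\lambda$ for the collection of \HT\ traces of length $\lambda$ over $\var$, and $\Gamma_\lambda$ for the collection of \HTC\ traces of length $\lambda$ over the encoding signature $\tuple{\var,\{\mathtt{t}\},\{p=\mathtt{t}\mid p\in\var\}}$. First I would confirm that $\delta$ indeed maps $\Delta_\lambda$ into $\Gamma_\lambda$: given $\tuple{\Htrace,\Ttrace}\in\Delta_\lambda$ with $\Htrace=(H_i)_{\rangeco{i}{0}{\lambda}}$ and $\Ttrace=(T_i)_{\rangeco{i}{0}{\lambda}}$, each valuation $v_{w,i}$ produced by $\delta$ is by construction a function $\var\to\{\mathtt{t},\undefined\}=\mathcal{D}_{\undefined}$, hence a legitimate partial valuation of the encoding signature; and since $H_i\subseteq T_i$, the equality $v_{h,i}(p)=\mathtt{t}$ forces $p\in H_i\subseteq T_i$, hence $v_{t,i}(p)=\mathtt{t}$, which is exactly $v_{h,i}\sqsubseteq v_{t,i}$. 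Running this over all $\rangeco{i}{0}{\lambda}$ yields $\Vh\sqsubseteq\Vt$, so $\delta(\tuple{\Htrace,\Ttrace})\in\Gamma_\lambda$.

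Next I would define the candidate inverse $\epsilon\colon\Gamma_\lambda\to\Delta_\lambda$ by $\epsilon(\tuple{\Vh,\Vt})\eqdef\tuple{(H_i)_{\rangeco{i}{0}{\lambda}},(T_i)_{\rangeco{i}{0}{\lambda}}}$ with $H_i\eqdef\{p\in\var\mid v_{h,i}(p)=\mathtt{t}\}$ and $T_i\eqdef\{p\in\var\mid v_{t,i}(p)=\mathtt{t}\}$. This lands in $\Delta_\lambda$, since each $H_i,T_i\subseteq\var$ and $\Vh\sqsubseteq\Vt$ gives $v_{h,i}(p)=\mathtt{t}\Rightarrow v_{t,i}(p)=\mathtt{t}$, i.e.\ $H_i\subseteq T_i$. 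Then I would check the two composition identities. For $\epsilon\circ\delta=\mathrm{id}_{\Delta_\lambda}$: starting from $\tuple{\Htrace,\Ttrace}$, the valuations produced by $\delta$ satisfy $v_{h,i}(p)=\mathtt{t}\iff p\in H_i$ by definition, so $\epsilon$ recovers exactly $H_i$, and symmetrically $T_i$. For $\delta\circ\epsilon=\mathrm{id}_{\Gamma_\lambda}$: from $\tuple{\Vh,\Vt}\in\Gamma_\lambda$, form $H_i,T_i$ as in $\epsilon$ and let $\tuple{\Vh',\Vt'}=\delta(\epsilon(\tuple{\Vh,\Vt}))$; then $v_{h,i}'(p)=\mathtt{t}\iff p\in H_i\iff v_{h,i}(p)=\mathtt{t}$, and because $\mathcal{D}=\{\mathtt{t}\}$ the only remaining value is $\undefined$, so $v_{h,i}'(p)=\undefined\iff v_{h,i}(p)=\undefined$ as well; hence $v_{h,i}'=v_{h,i}$ for every $\rangeco{i}{0}{\lambda}$, and likewise $v_{t,i}'=v_{t,i}$. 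Having a two‑sided inverse makes $\delta$ a bijection from $\Delta_\lambda$ onto $\Gamma_\lambda$.

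There is essentially no obstacle here; the one point worth care is the surjectivity direction, where it is crucial that the codomain is taken to be $\Gamma_\lambda$, that is, \HTC\ traces over the specific signature whose domain is the singleton $\{\mathtt{t}\}$. That restriction is precisely what guarantees that every partial valuation over the encoding signature is determined by the set of atoms it sends to $\mathtt{t}$, which is what lets $\epsilon$ invert $\delta$ on the nose; over an arbitrary signature $\delta$ would of course not be onto. Everything else is a routine unfolding of the definitions of \HT\ trace, \HTC\ trace, and the orderings $\sqsubseteq$ on valuations and on traces.
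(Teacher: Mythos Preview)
Your proof is correct and follows essentially the same route as the paper: the paper proves injectivity directly (by contradiction) and then establishes surjectivity by exhibiting, for an arbitrary \HTC\ trace over the encoding signature, exactly the preimage you call $\epsilon(\tuple{\Vh,\Vt})$. Your packaging of both directions into a single two-sided inverse is a minor stylistic variation, not a different argument.
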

\begin{proofof}{Proposition~\ref{prop:tht2thtc:bijection}}
We first prove that $\delta$ is injective. Let us assume towards a contradiction that there exist two \HT{} traces $\M = \tuple{\Htrace,\Ttrace}$ and $\M'= \tuple{\Htrace',\Ttrace'}$ such that $\delta(\M)=\tuple{\Vh,\Vt} =\delta(\M')=\tuple{\Vh',\Vt'} $ but $\M \not = \M'$. Without loss of generality, let us assume that there exists $\rangeco{i}{0}{\lambda}$ such that $H_i \not = H'_i$. Again, without loss of generality, let us assume that there exists a propositional variable $p$ such that $p \in H_i$ but $p\not \in H'_i$. By definition, $v_{h,i}(p) = \true$ but $v'_{h,i}(p) = \undefined$. Therefore $\delta(\M) \not = \delta(\M')$: a contradiction.

We prove now that $\delta$ is surjective. Given an \HTC\ trace $\tuple{\Vh,\Vt}$ of length $\lambda$,
we define the corresponding \HT\ trace $\M = \tuple{(H_i)_{\rangeco{i}{0}{\lambda}},(T_i)_{\rangeco{i}{0}{\lambda}}}$ of length $\lambda$ where
\begin{align*}
	H_i & = \{ p \mid p \in \var \text{ and } v_{h,i}(p) = \true \} &
	T_i & = \{ p \mid p \in \var \text{ and } v_{t,i}(p) = \true \}.
\end{align*}

The reader can check that $\delta(\M) = \tuple{\Vh,\Vt}$.
\end{proofof}

\begin{proofof}{Proposition~\ref{prop:tht2thtc}} The proof is made by induction on the complexity of $\varphi$.
	For the sake of readability, let us fix $\delta(\tuple{\Htrace,\Ttrace}) = \tuple{\Vh,\Vt}$.
	\begin{itemize}[itemsep=0pt]
		\item Case of an atom $p$: from left to right, if $\tuple{\Htrace,\Ttrace},i \models p$, it implies that $p \in H_i \subseteq T_i$, so $p \in T_i$. 
		Because of the model correspondence defined between $\tuple{\Htrace,\Ttrace}$ and $\tuple{\Vh,\Vt}$, $v_{h,i}(p) = 1$ and $v_{t,i}(p) = 1$.
		Therefore, $v_{h,i}(p), v_{t,i}(p)  \in \tbl{p=\true}$, so $\tuple{\Vh,\Vt}, i \models p=\true$.
		From right to left, if $\tuple{\Vh,\Vt}, i \models p=\true$ then $v_{h,i}(p), v_{t,i}(p) \in \tbl{p=\true}$.
		By definition, $v_{h,i}(p) = v_{t,i}(p) = \true$.
		Because of the model correspondence, $p \in H_i$ and $p \in T_i$. By the semantics, $\tuple{\Htrace,\Ttrace},i \models p$.
	\item Disjunction and conjunction are obtained directly by induction.
	\item Case $\varphi \to \psi$:  from left to right, if $\tuple{\Htrace,\Ttrace},i \models \varphi \to \psi$, it means that (i) $\tuple{\Htrace,\Ttrace},i \not \models \varphi$ or $\tuple{\Htrace,\Ttrace},i  \models \psi$  and (ii) $\tuple{\Ttrace,\Ttrace},i \not \models \varphi$ or $\tuple{\Ttrace,\Ttrace},i  \models \psi$. Since $\tuple{\Ttrace,\Ttrace}$ is also a \THT{} model, we can apply the induction hypothesis on both (i) and (ii) in order to get (i') $\tuple{\Vh,\Vt},i \not \models \varphi$ or $\tuple{\Vh,\Vt},i  \models \psi$, and that (ii') $\tuple{\Vt,\Vt},i \not \models \varphi$ or $\tuple{\Vt,\Vt},i  \models \psi$. By the semantics it follows that
	$\tuple{\Vh,\Vt},i \models \varphi \to \psi$. The converse direction is proved in a similar way.
	\item Case $\next \varphi$: from left to right, if $\tuple{\Htrace,\Ttrace},i \models \next \varphi$,  it implies that $i < \lambda -1$ and $\tuple{\Htrace,\Ttrace},i+1 \models \varphi$. By induction, it follows that $\tuple{\Vh,\Vt},i +1 \models \varphi$, so $\tuple{\Vh,\Vt},i \models \next \varphi$ because of the satisfaction relation. A similar argument applies to the converse direction.
	\item The case of $\previous \varphi$ follows a similar reasoning as for the next operator.
	\item The proof for $\varphi \until \psi$,  $\varphi \release \psi$, $\varphi \since \psi$ and  $\varphi \trigger \psi$ can be done by using the induction hypothesis.
	\end{itemize}
\end{proofof}

\begin{proofof}{Proposition~\ref{prop:tht2thtc:equilibrium}}
	Since $\delta$ is bijective, there exists a one-to-one correspondence among the interpretations.
	Therefore, there is also a one-to-one correspondence among the order relations in \THT{} and \THTC{}. 
\end{proofof}

\begin{proofof}{Proposition~\ref{prop:derived:op}} It can be proved by only using the satisfaction relation. Left to the reader.
\end{proofof} 
\begin{proofof}{Proposition~\ref{lem:kamp}}
	Proof sketch. By structural induction. 
	\begin{itemize}
		\item Case $c(\next^{o_1}x_1,\dots,\next^{o_n}x_n)$: from left to right, if $\M, i \models c(\next^{o_1}x_1,\dots,\next^{o_n}x_n)$ then
		\begin{displaymath}
		(v_{h,i+o_1}(x_1),\cdots, v_{h,i+o_n}(x_n)),\; (v_{t,i+o_1}(x_1),\cdots, v_{t,i+o_n}(x_n)) \in \tbl{c(\next^{o_1}x_1,\dots,\next^{o_n}x_n)}.	
		\end{displaymath} 
		By the model correspondence relation we get $\sigma_h(f_{x_j}(i+o_j)) = v_{h,i+o_j}(x_j)$ for all $\rangeco{j}{1}{n}$.
		
		Since $(v_{h,i+o_1}(x_1),\cdots, v_{h,i+o_n}(x_n))\in \tbl{c(\next^{o_1}x_1,\dots,\next^{o_n}x_n)}$, by definition,
		$c(\sigma_h(f_{x_1}(i+o_1)), \cdots, \sigma_h(f_{x_n}(i+o_n))) \in I_h$.
		Since $(v_{t,i+o_1}(x_1),\cdots, v_{t,i+o_n}(x_n))\in \tbl{c(\next^{o_1}x_1,\dots,\next^{o_n}x_n)}$, by definition,
		$c(\sigma_t(f_{x_1}(i+o_1)), \cdots, \sigma_t(f_{x_n}(i+o_n))) \in I_t$.
		Therefore, $\mathcal{M} \models \ST{t}{c(\next^{o_1}x_1,\dots,\next^{o_n}x_n)}[t\leftarrow i]$.

		Conversely, let us assume that  $\mathcal{M}\models\ST{t}{c(\next^{o_1}x_1,\dots,\next^{o_n}x_n)}[t\leftarrow i]$.
		Therefore, $c(\sigma_h(f_{x_1}(i+o_1)), \cdots, \sigma_h(f_{x_n}(i+o_n)))\in I_h$.
		Since $(\sigma_h, I_h) \peq (\sigma_t,I_t)$, $c(\sigma_t(f_{x_1}(i+o_1)), \cdots, \sigma_t(f_{x_n}(i+o_n)))\in I_t$. 

		Because of the correspondence, we get that $(v_{h,i+o_1}(x_1), \cdots, v_{h,i+o_n}(x_n))$ and $(v_{t,i+o_1}(x_1), \cdots, v_{t,i+o_n}(x_n))$ belong to $\tbl{c(\next^{o_1}x_1,\dots,\next^{o_n}x_n)}$. 
		Therefore, $\M, i \models c(\next^{o_1}x_1,\dots,\next^{o_n}x_n)$.
		\item The proof for disjunction, conjunction and implication comes directly from the induction hypothesis.
		\item Case $\next \varphi$: from left to right, if $\M,i \models \next \varphi$ then $i < \lambda -1$ and $\M,i+1\models \varphi$. By induction hypothesis, $\mathcal{M}\models \ST{t}{\varphi}[t\leftarrow i+1]$.
		By some reasoning in first-order \HT{} it follows that $\mathcal{M} \models \exists i'\; \left(i'=i+1\right)\wedge \ST{i'}{\varphi}$, which means that $\mathcal{M}\models \ST{t}{\next \varphi}[t \leftarrow i]$.
		From right to left, if $\mathcal{M}\models \ST{t}{\next \varphi}[t \leftarrow i]$ then $\mathcal{M}\models \exists i'\; \left(i'  = i+1\right)\wedge \ST{i'}{\varphi}$, so $\mathcal{M}\models \ST{i'}{\varphi}[i'\leftarrow i+1]$. By induction hypothesis, $\M, i+1 \models \varphi$ so $\M, i \models \next \varphi$ by the \THTC{} satisfaction relation. 
		
		\item Case $\previous \varphi$: from left to right, if $\M,i \models \previous \varphi$ then $0 <i$ and $\M,i-1\models \varphi$. By induction hypothesis, $\mathcal{M}\models \ST{t}{\varphi}[t \leftarrow i-1]$.
		By some reasoning in first-order \HT{} it follows that $\mathcal{M} \models \exists i'\; \left(i'=i-1\right)\wedge \ST{i'}{\varphi}$, which means that $\mathcal{M}\models \ST{t}{\previous \varphi}[t \leftarrow i]$.
		From right to left, if $\mathcal{M}\models \ST{t}{\previous \varphi}[t \leftarrow i]$ then $\mathcal{M}\models \exists i'\; \left(i'  = i-1\right)\wedge \ST{i'}{\varphi}$, so $\mathcal{M} \models \ST{i'}{\varphi}[i'\leftarrow i-1]$. By induction hypothesis, $\M, i-1 \models \varphi$. Since $i>0$, $\M, i \models \previous \varphi$.
		\item Case $\varphi\until\psi$: From left to right, let us assume that $\M, i \models \varphi\until\psi$ then there exists $j \ge i$ s.t. $\M, j \models \psi$ and $\M, k\models \varphi$ for all $\rangeco{k}{i}{j}$. 
		By induction hypothesis it follows that $\mathcal{M} \models \ST{t}{\psi}[t \leftarrow j]$ for some $j \ge i$ and $\mathcal{M}\models \ST{t}{\varphi}[t\leftarrow k]$ for all $\rangeco{k}{i}{j}$.
		By reasoning on first-order logic, it follows that $\mathcal{M}\models \ST{t}{\varphi \until \psi}[t \leftarrow i]$. The converse direction is proved similarly.
		\item Case $\varphi\release\psi$: the proof is similar to the proof of until.
		\item Case $\varphi\since \psi$: the proof is similar to the one for the until but reasoning on the past instead.
		\item Case $\varphi\trigger\psi$: the proof is similar to the proof of since.
	\end{itemize}
\end{proofof}

\begin{proofof}{Proposition~\ref{lem:kamp:equilibrium}}
We begin by noting that the ordering relation between \HTC{} traces  $\Vh \le \Vt$ is also in a one-to-one correspondence with the ordering relation
$(\sigma_h,I_h) \peq (\sigma_t,I_t)$. Then, the result follows from the definitions of equilibrium models together with Lemma~\ref{lem:kamp}.
\end{proofof}	

 \begin{proofof}{Proposition~\ref{prop:complement}}
	
	\noindent If $\M, i \models c(\next^{o_1}x_1,\cdots,\next^{o_n}x_n)$ then both $(v_{h,i+o_1}(x_1),\cdots,v_{h,i+o_n}(x_n)) \in \tbl{c(\next^{o_1}x_1,\cdots,\next^{o_n}x_n)}$
	and $(v_{t,i+o_1}(x_1),\cdots,v_{t,i+o_n}(x_n)) \in \tbl{c(\next^{o_1}x_1,\cdots,\next^{o_n}x_n)}$. 
	By the definition of $\tbl{\overline{c}(\next^{o_1}x_1,\cdots,\next^{o_n}x_n)}$ it follows that
	$(v_{h,i+o_1}(x_1),\cdots,v_{h,i+o_n}(x_n))\not \in \tbl{\overline{c}(\next^{o_1}x_1,\cdots,\next^{o_n}x_n)}$ and 
	$(v_{t,i+o_1}(x_1),\cdots,v_{t,i+o_n}(x_n)) \not \in \tbl{\overline{c}(\next^{o_1}x_1,\cdots,\next^{o_n}x_n)}$. 	
	By the satisfaction relation we conclude that $\M, i \not \models \overline{c}(\next^{o_1}x_1,\cdots,\next^{o_n}x_n)$ and 
	$\tuple{\Vh,\Vt}, i \not \models \overline{c}(\next^{o_1}x_1,\cdots,\next^{o_n}x_n)$.
	By Corollary~\ref{cor:negation}, $\M, i \models \neg \overline{c}(\next^{o_1}x_1,\cdots,\next^{o_n}x_n)$.	
\end{proofof}

\begin{proofof}{Proposition~\ref{prop:dfs}} Let us assume without loss of generality that the variables in $\alpha$ range on $\mathbb{Z}$.
	Before presenting the proof we want to remark that 
	\begin{displaymath}
		\tbl{\next^{o_j}x_j \le \next^{o_j}x_j} = \lbrace a \in \mathbb{Z}\mid a \le a\rbrace = \mathbb{Z}.
	\end{displaymath}
\noindent with $\next^{o_j}x_j \in \vars(\alpha)$. The proof is presented below.
 \noindent If $\M, i \models \df(\alpha)$ then 
	\begin{equation*}
			\M, i \models \bigwedge\limits_{\next^{o_j} x_j \in \vars(\alpha)} \next^{o_j} x_j\le \next^{o_j} x_j.
		\end{equation*} 
From this we conclude that $\M,i \models\next^{o_j} x_j\le \next^{o_j} x_j$ iff $\rangeco{i+o_j}{0}{\lambda}$ and 
$\undefined \not = v_h^{i+o_j}(x_j)  \in \mathbb{Z}$.
Since $v_h^{i+o_j}(x_j) \not = \undefined$ and $v_{h,i+o_j}\sqsubseteq v_{t,i+o_j}$, $\undefined \not= v_{h,i+o_j} = v_{t,i+o_j}\in \mathbb{Z}$.
Since the constraint atom $\next^{o_j} x_j\le \next^{o_j} x_j$ was chosen arbitrary, we can conclude~\ref{prop:dfs:2}.
For the converse direction, let $\next^{o_j} x_j \in \vars(\alpha)$. 
If $\undefined \not = v_{h,i}(\next^{o_j} x_j) = v_{t,i}(\next^{o_j} x_j)$ then $\rangeco{i+ o_j}{0}{\lambda}$ and $v_{h,i+o_k}(x_j)= v_{t,i+o_k}(x_j) \in \mathbb{Z}$.
Therefore, $\M, i \models \next^{o_j} x_j\le \next^{o_j} x_j$. Since $\next^{o_j} x_j\in \vars(\alpha)$ was chosen arbitrary, it follows that 
$\M, i \models \next^{o_j} x_j\le \next^{o_j} x_j$, for all $\next^{o_j}x_j \in \vars(\alpha)$. Therefore,~\ref{prop:dfs:1} holds.
\end{proofof}

\begin{proofof}{Proposition~\ref{translation}} In this proof we consider the language of propositional logic. 
	\begin{equation}
		B_1\wedge  \cdots\wedge B_m\wedge \neg B_{m+1} \wedge \cdots \wedge \neg B_{k} \rightarrow \next^l x := \alpha
	\end{equation}
	\noindent is equivalent to 
	\begin{equation*}
		\left(B_1\wedge  \cdots\wedge B_m\wedge \neg B_{m+1} \wedge \cdots \wedge \neg B_{k}\right) \rightarrow \left(\df(\alpha)\rightarrow \next^l x = \alpha\right).
	\end{equation*}
	Since a double implication formula $(\varphi \to (\psi \to \chi))$ is logically equivalent to  $(\varphi \land \psi) \to \chi$, the aforementioned formula is equivalent to  
	\begin{equation*}
		B_1\wedge  \cdots\wedge B_m\wedge \neg B_{m+1} \wedge \cdots \wedge \neg B_{k} \wedge \df(\alpha) \rightarrow \next^l x = \alpha.
	\end{equation*}
	
	\noindent By replacing $\df(\alpha)$ by its definition we get 
	
	\begin{equation*}
		\left(\phantom{\bigwedge\limits_{jj}}B_1\wedge  \cdots\wedge B_m\wedge \neg B_{m+1} \wedge \cdots \wedge \neg B_{k} \wedge
		\bigwedge\limits_{\next^{o_i}x_i\in \vars(\alpha)}\next^{o_i}x_i \le \next^{o_i}x_i\right)\rightarrow \next^l x = \alpha.
	\end{equation*}
\end{proofof}  
\end{document}